\newcommand{\eref}[1]{Eq.~(\ref{#1})} 
\newcommand{\sref}[1]{Sec.~\ref{#1}} 
\newcommand{\figref}[1]{Fig.~\ref{#1}} 
\newcommand{\ssnote}[1]{\ifthenelse{\boolean{include-notes}}%
 {\textcolor{blue}{\textbf{SS: #1}}}{}}
\newcommand{\rhnote}[1]{\ifthenelse{\boolean{include-notes}}%
 {\textcolor{red}{\textbf{RH: #1}}}{}}
\newcommand{\rhadd}[1]{\ifthenelse{\boolean{include-notes}}%
 {\textcolor{orange}{\textbf{ADD: #1}}}{}}
\newcommand{\osnote}[1]{\ifthenelse{\boolean{include-notes}}%
 {\textcolor{purple}{\textbf{OS: #1}}}{}}
\newcommand{\frechet}{Fr\'{e}chet }
\newcommand{\frechetd}{F_{d}\xspace}
\newcommand{\bigoh}[1]{\mathcal{O}(#1)}
\newcommand{\cspaceWord}{\Cspace-space\xspace}
\title{\LARGE \bf
Minimizing Task-Space \frechet Error via Efficient Incremental Graph Search}
\author{Rachel Holladay$^{1}$, Oren Salzman$^{2}$ and Siddhartha Srinivasa$^{3}$
\thanks{$^{1}$Rachel Holladay is with the Computer Science and Artificial Intelligence Laboratory, Massachusetts Institute of Technology, {\tt\small rhollada@mit.edu}. $^{2}$Oren Salzman is with the Robotics Institute, Carnegie Mellon University. {\tt\small osalzman@andrew.cmu.edu}. $^{3}$Siddhartha Srinivasa is with the Paul G. Allen School of Computer Science \& Engineering, University of Washington, {\tt\small siddh@cs.uw.edu}}}
\begin{document}

\maketitle
\thispagestyle{empty}
\pagestyle{empty}

\begin{abstract}
We present an anytime algorithm that generates a collision-free configuration-space path that closely follows a desired path in task space, according to the discrete \frechet distance.
By leveraging tools from computational geometry, we approximate the search space using a cross-product graph.
We use a variant of Dijkstra's graph-search algorithm to efficiently search for and iteratively improve the solution. 
We compare multiple proposed densification strategies and empirically show that our algorithm outperforms a set of state-of-the-art planners on a range of manipulation problems. 
Finally, we offer a proof sketch of the asymptotic optimality of our algorithm. 
\end{abstract}

\newcommand{\calA}{\ensuremath{\mathcal{A}}\xspace}
\newcommand{\calC}{\ensuremath{\mathcal{C}}\xspace}
\newcommand{\calE}{\ensuremath{\mathcal{E}}\xspace}
\newcommand{\calG}{\ensuremath{\mathcal{G}}\xspace}
\newcommand{\calR}{\ensuremath{\mathcal{R}}\xspace}
\newcommand{\calM}{\ensuremath{\mathcal{M}}\xspace}
\newcommand{\calX}{\ensuremath{\mathcal{X}}\xspace}
\newcommand{\calS}{\ensuremath{\mathcal{S}}\xspace}
\newcommand{\calQ}{\ensuremath{\mathcal{Q}}\xspace}
\newcommand{\calT}{\ensuremath{\mathcal{T}}\xspace}
\newcommand{\calL}{\ensuremath{\mathcal{L}}\xspace}
\newcommand{\calB}{\ensuremath{\mathcal{B}}\xspace}
\newcommand{\ch}{\mathrm{ch}}
\newcommand\norm[1]{\left\lVert#1\right\rVert}

\newcommand{\Cfree}{\ensuremath{\calC_{\rm free}}\xspace}
\newcommand{\Cforb}{\ensuremath{\calC_{\rm forb}}\xspace}
\newcommand{\Cspace}{\ensuremath{\calC}\xspace}
\newcommand{\Tspace}{\ensuremath{\calT}\xspace}
\newcommand{\FK}{\text{FK}}
\newcommand{\IK}{\text{IK}}
\newcommand{\Rpath}{\ensuremath{\bar{\xi}}\xspace}
\def\distC{\mathop{\mathrm{d_{\Cspace}}}}
\def\distT{\mathop{\mathrm{dist_{\Tspace}}}}
\def\distF{\mathop{\mathrm{dist_{F}}}}

\newcommand{\Man}{\ensuremath{\calM_{\Rpath}}\xspace}
\newcommand{\WellMan}{\ensuremath{\tilde{\calM}_{\Rpath}}\xspace}

\newcommand{\DOF}{\textit{dof}\xspace}
\newcommand{\DOFs}{\textit{dofs}\xspace}

\newcommand{\eg}{{e.g.}\xspace}
\newcommand{\ie}{{i.e.}\xspace}
\newcommand{\etc}{{etc.}\xspace}
\newcommand{\etal}{{et~al.}\xspace}

\def\naive{{na\"{\i}ve}\xspace}

\def\oren#1{\textcolor{magenta}{#1}}
\def\ssnote#1{\textcolor{red}{#1}}

\newtheorem{thm}{Theorem}
\newtheorem{lem}{Lemma}
\newtheorem{observation}[thm]{Observation}

\def\cl{\mathop{\mathrm{cl}}}
\def\nbr{\mathop{\mathrm{nbr}}}
\def\prnt{\mathop{\mathrm{p}}}
\def\anc{\mathop{\mathrm{anc}}}
\def\dist{\mathop{\mathrm{dist}}}
\def\cmprs{\ensuremath{\chi}}

\newcommand{\ignore}[1]{}

\newboolean{WAFR}
\newboolean{ARXIV}

\setboolean{WAFR}{false}
\ifthenelse{\boolean{WAFR}}
    {\setboolean{ARXIV}{false} }
    {\setboolean{ARXIV}{true}  }

\newcommand{\textVersion}[2]
{\ifthenelse{\boolean{WAFR} }{#1}{}\ifthenelse{\boolean{ARXIV}}{#2}{}}

\section{Introduction}
\label{sec:introduction}


The classical formulation of the motion-planning problem calls for planning a collision-free (possibly optimal) path between a given start and target configuration~\cite{HSS17} in a robot's configuration space (\cspaceWord).
However for robot arms, the path of the end-effector is often of greater relevance. For example, the end-effector path might be subject to constraints such as keeping a coffee mug upright, or might even be restricted to a specific path such as pulling a door open, writing on a whiteboard, or welding a seam on a car.

We focus on the latter problem: enabling a redundant robotic manipulator to follow a reference path in task space. There are two state-of-the-art approaches to solving this problem:
\begin{enumerate}
    \item \textbf{Projection-based approaches} exploit the kinematic redundancy of the manipulator~\cite{maciejewski1985obstacle,nakamura1987task,ahmad1989coordinated,burdick1989inverse} to drive the robot along the desired path~\cite{chiacchio1991closed,guo1993joint,roberts1993repeatable,seereeram1995global,siciliano1990kinematic}. Although these are typically efficient and can follow the desired path accurately, they are \emph{myopic} and can fail due to joint limits or collisions~\cite{martin1989resolution,oriolo2017repeatable,rakita2018relaxedik}.
    \item \textbf{Graph-based approaches} sample the task-space path, compute a set of inverse kinematics (IK) solutions in the \cspaceWord for each sample along the path, create a graph by connecting nearby configurations via a simple planner (like a straight line), and solve for the shortest-feasible path on this graph~\cite{oriolo2002probabilistic,descartes}. 
    Although they can solve more intricate problems via organized search, they are typically much slower, when compared to projection based approaches. More importantly, their optimization criteria (shortest path in \cspaceWord) lacks any notion of ``following'' the task-space reference path.
\end{enumerate}

\begin{figure}[!tb]
    \centering
		\begin{subfigure}[h]{0.8\columnwidth}
		      \includegraphics[width=0.99\columnwidth]{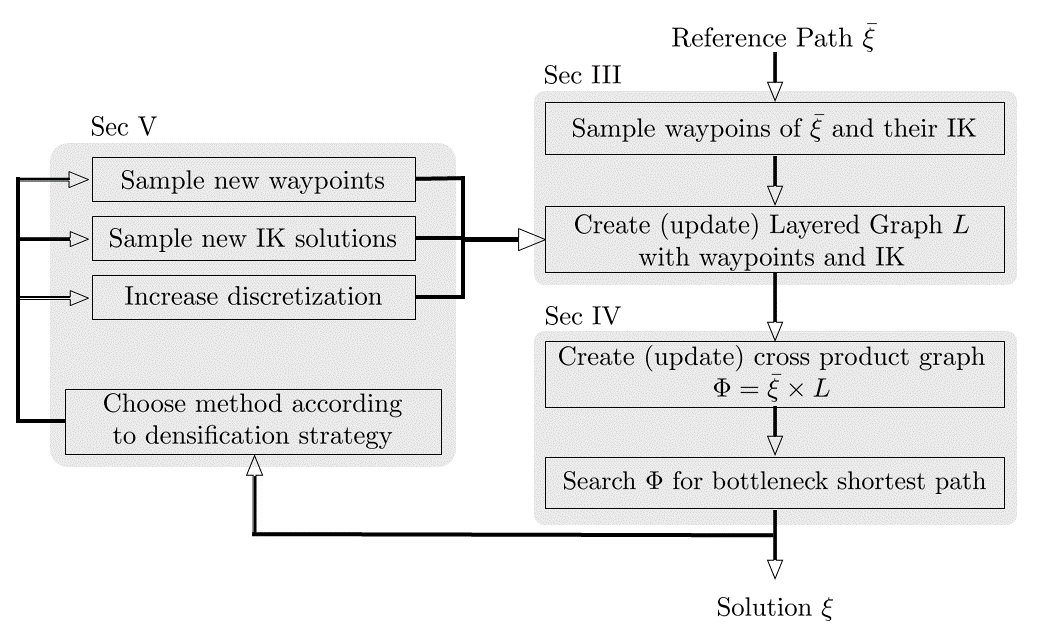}
		\end{subfigure} \\
    \begin{subfigure}[h]{0.3\columnwidth}
          \includegraphics[height=2.8cm]{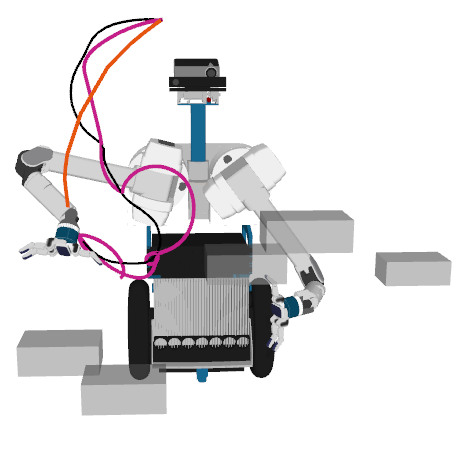}
          \caption{Initial Paths}\label{fig:initial_herb15}      
    \end{subfigure}
    \begin{subfigure}[h]{0.3\columnwidth}
        \includegraphics[height=2.8cm]{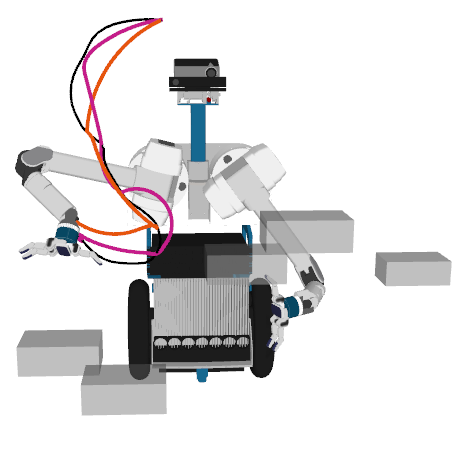}
        \caption{Midway Progress} \label{fig:mid_herb15}
    \end{subfigure}
    \begin{subfigure}[h]{0.3\columnwidth}
        \includegraphics[height=2.8cm]{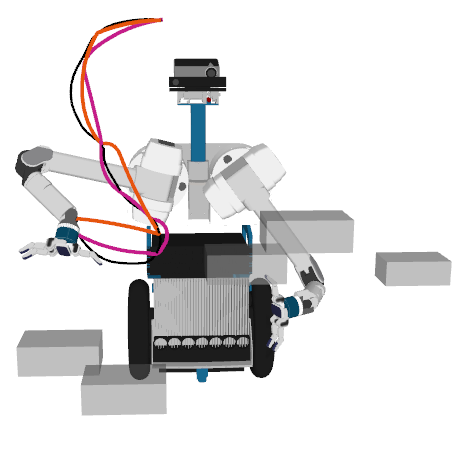}
        \subcaption{Final Paths} \label{fig:final_herb15}
    \end{subfigure}
\caption{On top is a flowchart of our algorithm. We create data structures that allow us to efficiently compute a path that minimizes the \frechet distance to the reference path and then to incrementally reduce this distance. Each grey box outlines a major step: 
(1)~generating candidate paths, 
(2)~searching over paths and 
(3)~densifying. 
On the bottom we see the progression of our planner (pink) and the planner from Holladay et al.~\cite{holladay2016distance} (orange) as they trace out the reference path (black) in the presence of obstacles (grey).}
\label{fig:fig1}
\vspace{2mm}
\end{figure}

The goal of our paper is to make graph-based approaches more efficient while still being sufficiently accurate. Central to our approach is the simple, yet fundamental question: \emph{What does it mean to approximately follow a path?} We can rephrase this more formally as
\begin{quote}
What is the right distance metric for comparing two paths in task space?
\end{quote}

Informally, let us say that we would like to stay within an $\varepsilon$-ball of any point on the reference path. The \emph{one-way Hausdorff distance} satisfies this property~\cite{hausdorff}. However, we might end up shortcutting large sections of the path. Now, if we force such proximity for both the reference and the target path, via the \emph{two-way Hausdorff distance}, we avoid shortcutting~\cite{hausdorff}. However, this does not preserve \emph{monotonicity} of traversal. If we additionally enforce monotonicity, we end up with the \emph{\frechet distance}~\cite{frechet1906quelques}.

Holladay et al.~\cite{holladay2016distance} showed the practical superiority of the \frechet distance over other metrics for trajectory optimization of manipulator motion. However, their approach, like other optimization-based approaches, suffers from local minima (\sref{sec:trajopt_bug}).

In this work we suggest to approximate the search space of candidate paths by a layered graph that organizes IK solutions by their task-space location along the reference path.
By representing both the layered graph and our reference path as simplicial complexes~\cite{har2014frechet}, we can construct  the cross product of these two complexes.
This, in turn, allows us to efficiently compute the (discrete) \frechet distance between the set of candidate paths in the layered graph and our reference path via  a simple Bottleneck Shortest Path algorithm.  

We present an anytime algorithm for incrementally densifying these structures and improving our solution  and prove that our approach is asymptotically optimal, given some assumptions. Empirically, we evaluate our approach on a seven degree-of-freedom manipulator and demonstrate its efficacy when compared to existing state-of-the-art algorithms on multiple paths and parameter settings. For a summary of our algorithmic approach, see \figref{fig:fig1} (top).

Our key insight is that marrying the correct metric (\frechet distance) with the correct search algorithm (bottleneck shortest path) enables us to focus our computation on parts of the space that are most relevant for the problem, thereby gaining better efficiency.

\section{Problem Definition and Algorithmic Background}
\label{sec:background}
In this section we provide the basic definitions (Sec~\ref{sec:definitions}) which allow us to formally state our problem (Sec.~\ref{sec:problem_statement}).
We 
define the \frechet distance (Sec.~\ref{sec:distance_metrics}) and briefly describe the approach proposed by Holladay et al.~\cite{holladay2016distance}.
We explain a key shortcoming of their work, which motivates our approach (Sec.~\ref{sec:trajopt_bug}).

\subsection{Definitions}
\label{sec:definitions}

A configuration $q$ is a $d$-dimensional point that completely describes the location of the robot and the \cspaceWord~\Cspace is the set of all configurations~\cite{lozano1990spatial}. 
A task-space point $\tau \in SE(3)$ describes the position and orientation of the robot's end effector and the task space is the set of all such points.
Paths in \cspaceWord and task space are continuous mappings
 $\xi : [0, 1] \rightarrow$~\Cspace
and
 $\xi : [0, 1] \rightarrow SE(3)$, respectively\footnote{For simplicity, we use the same notation for paths in  \cspaceWord and in task space. The specific space will be clear from the context.}.
 
The robot induces 
a forward kinematics $\text{FK}: \Cspace \rightarrow  SE(3)$ 
and
an inverse kinematics $\text{IK}: SE(3) \rightarrow 2^\Cspace$ 
that map
a configuration to a unique task-space pose
and
a task-space pose to a set of configurations, respectively. 
By a slight abuse of notation we will use $\text{FK}(\cdot)$ to map a \cspaceWord path into a task-space path. 
Equipped with these definitions, we can define our problem.


\subsection{Problem Statement}
\label{sec:problem_statement}

We are given a robot and a reference path in task space~$\bar{\xi}$
that is a polyline given as a sequence of waypoints. 
Let $\Xi_{\bar{\xi}} \subset ~\Cspace$ be the set of all collision-free paths in \cspaceWord that have the same start and end task-space poses as $\bar{\xi}$.
Our objective is to compute 
\begin{equation}
	\xi^* = \arg \min_{\xi \in \Xi_{\bar{\xi}}} || \text{FK}(\xi), \bar{\xi}||.
\label{eqn:pdef}
\end{equation}
Namely, we seek a collision-free path $\xi \in$~\Cspace whose forward kinematics maps to a path in task space, $\text{FK}(\xi)$, that follows~$\bar{\xi}$ as close as possible, given some similarity metric $|| \cdot, \cdot ||$.
Similarly to~$\bar{\xi}$, our produced path $\xi$ is a polyline represented by a sequence of waypoints. 
To validate that paths are collision-free, we assume that we are given access to a discriminative black-box collision detector that, given a configuration~$q \in \Cspace$, returns whether or not the robot, placed in~$q$, would be in collision.
The distance metric $|| \cdot, \cdot ||$ used to compare paths is the \frechet distance, described below.
For a discussion motivating the use of the \frechet distance in this context, see Sec.~\ref{sec:introduction} and Holladay et al.~\cite{holladay2016distance}.

\subsection{Distance Metrics}
\label{sec:distance_metrics}

To describe the \frechet distance, we borrow a common analogy  where a dog is walking along a path~$\xi_{0}$ at speed parameterization $\alpha$ and its owner is walking along another path~$\xi_{1}$ at speed parameterization $\beta$. 
The two are connected via a leash. The Fr\'{e}chet distance is the shortest possible leash via some distance metric $d_{TS}$ such that there exists a parameterization $\alpha$ and $\beta$ so that the two stay connected and move monotonically. 
More formally the continuous \frechet distance between $\xi_{0}$ and $\xi_{1}$ is given by:
\begin{equation}
    F(\xi_{0},\xi_{1}) = \inf_{\alpha, \beta}\,\,\max_{t \in [0,1]} \,\,  \Bigg \{d_{TS} \Big ( \xi_{0}(\alpha(t)), \, \xi_{1}(\beta(t)) \Big ) \Bigg \}.
\label{eqn:frechet}
\end{equation}

As is common in motion planning~\cite{sucan2012open},
given two points $x, y \in SE(3) = {\mathbb{R}}^3 \times SO(3)$,
we define their distance, $d_{TS}(x, y)$, 
as the weighted sum\footnote{In our setting, we prioritize translational distance over rotational distance using a ratio of 0.17, which corresponds to 3mm mapping to 1 degree.} 
of the Euclidean metric in ${\mathbb{R}}^3$ 
and the standard great circle solid angle metric in $SO(3)$ for the respective components.

Since computing the continuous \frechet distance is notoriously difficult, especially in non-Euclidean spaces~\cite{solovey2016sampling} and our path representation is given as a series of waypoints, we
approximate $F(\cdot, \cdot)$ using the the discrete \frechet distance~$\frechetd(\cdot, \cdot)$, where the ``leash'' is only considered between discrete waypoints along the two paths.
This metric can be efficiently computed  via dynamic programming~\cite{agarwal2014computing,eiter1994computing}. 

\subsection{Trajectory-Optimization Approach}
\label{sec:trajopt_bug}
\begin{figure*}[!tb]
    \centering
		\begin{subfigure}[h]{0.33\textwidth}
        \centering
          \includegraphics[height=2.4cm]{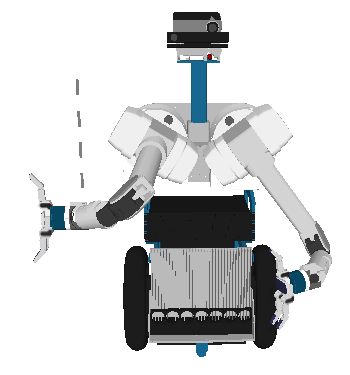}
        \includegraphics[height=2.4cm]{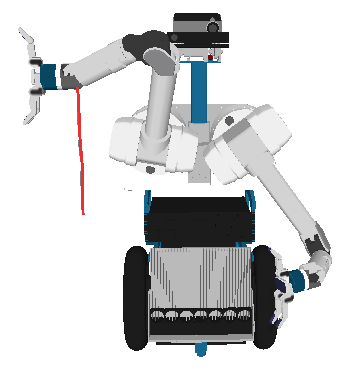}
          \caption{Successfully optimized path.}\label{fig:good}      
    \end{subfigure}
    \begin{subfigure}[h]{0.33\textwidth}
        \centering
          \includegraphics[height=2.4cm]{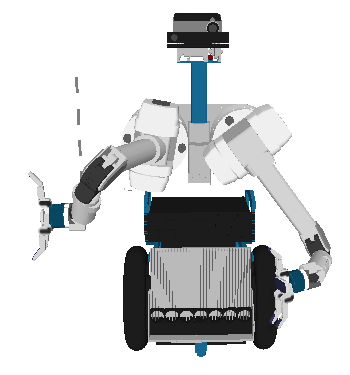}
        \includegraphics[height=2.4cm]{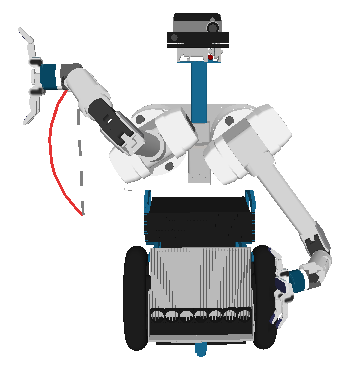}
          \caption{Unsuccessfully optimized path.}\label{fig:bad_start}      
    \end{subfigure}
    \begin{subfigure}[h]{0.16\textwidth}
        \includegraphics[height=2.4cm]{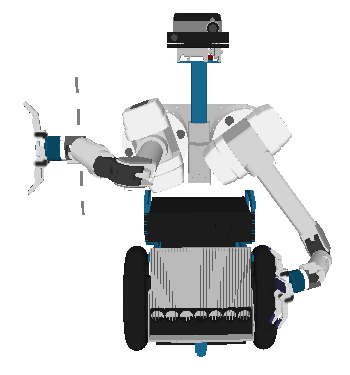}
        \caption{Splitting point.} \label{fig:bad_split}
    \end{subfigure}
    \begin{subfigure}[h]{0.16\textwidth}
        \includegraphics[height=2.4cm]{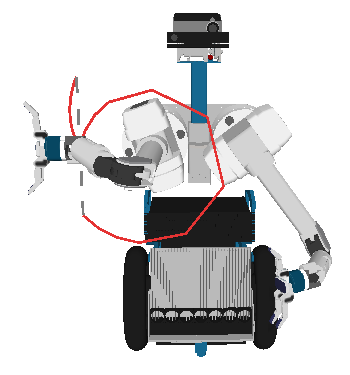}
        \subcaption{New path.} \label{fig:bad_end}
    \end{subfigure}
\caption{Visualization of the approach taken by Holladay et al.~\cite{holladay2016distance}. Reference path and computed path are shown in dotted black and solid red lines, respectively.}
\label{fig:trajopt}
\end{figure*}

The key insight from Holladay et al.~\cite{holladay2016distance} is to optimize Eq.~\ref{eqn:pdef} by minimizing $\frechetd(\bar{\xi}, \text{FK}(\xi))$. 
Framed as a trajectory-optimization problem, the paper provides methods to heuristically assist the optimizer by constraining the computed path into a sequence of smaller problems. 

We examine the algorithm's behavior on HERB, a bimanual mobile manipulator with seven degree-of-freedom arms~\cite{srinivasa2010herb}, as it tries to follow a straight-line reference path~$\bar{\xi}$, shown as the dotted line in~\figref{fig:trajopt}.
The algorithm picks start and end configurations and then plans a path from start to end, attempting to minimize $\frechetd(\bar{\xi}, \text{FK}(\xi))$. 



With the starting configuration in \figref{fig:good} (left), the planner drives the cost to zero, producing the solid red line path shown in~\figref{fig:good} (right).
However, since this is a redundant manipulator, the algorithm could have also picked the starting configuration shown in \figref{fig:bad_start} (left). 
Given this configuration, there is no path that exactly follows~$\bar{\xi}$. 
Therefore the optimizer produces the red path in \figref{fig:bad_start} (right), which deviates significantly from~$\bar{\xi}$.
The optimization-based algorithm of~\cite{holladay2016distance} will then split~$\bar{\xi}$ at the point where the generated path deviates the most from~$\bar{\xi}$, according to the \frechet distance.
In this case, it splits the path in the middle and samples an IK solution, shown in \figref{fig:bad_split}.
As shown in \figref{fig:bad_end}, the first half of the path still suffers from the original problem.

This limitation stems from the fact that the algorithm samples one IK solution for each pose along~$\bar{\xi}$.
However, there is a space of multiple IK solutions which may admit different paths.
This motivates our method, which searches over a space of IK solutions in an anytime fashion.


\section{Generating a Set of Candidate Paths}
\label{sec:graph_creation}
Recall that our goal, defined in \eref{eqn:pdef}, is to find a collision-free path $\xi \in$ \Cspace such that $\text{FK}(\xi)$ minimizes the \frechet distance to $\bar{\xi}$.
This will be done by solving the following problem
\begin{equation}
	\xi^* = \arg \min_{\xi \in \Xi_{\bar{\xi}}} \frechetd( \text{FK}(\xi), \bar{\xi}),
\label{eqn:pdef2}
\end{equation}
and iteratively refining the number of waypoints along~$\xi$ and $\bar{\xi}$ to refine our  approximation of  the continuous \frechet distance. 
To do so, we build a layered graph~$L$ that approximates the set of candidate paths, $\Xi_{\bar{\xi}}$. 
As our algorithm progresses, we try to both improve the \textit{quality} of our path by exploring more candidate paths and improve the \textit{accuracy} of our \frechet approximation by increasing our sampling resolution.  

\subsection{Layered Graph}
\label{sec:layered_graph}
Consider the set of inverse kinematic solutions to all points along our reference path~$\bar{\xi}$:
\begin{equation}
\Man = \bigcup\limits_{\alpha \in [0, 1]} \text{IK}(\bar{\xi}(\alpha)). 
\label{eqn:cspace_ik}
\end{equation}
Any collision-free path that connects 
$\text{IK}(\bar{\xi}(0))$ 
with
$\text{IK}(\bar{\xi}(1))$ 
while completely lying on the manifold $\Man$ minimizes \eref{eqn:pdef}.
To approximate such a path, we sample $\Man$ and connect samples by straight-line segments in \cspaceWord (that may deviate from \Man).

The structure of \eref{eqn:cspace_ik} suggests two parameters that can be used to organize our sampling in a structured manner. 
The first is the location of a point in task space along $\bar{\xi}$, denoted by $\alpha$. 
The second is the set of IK solutions at each point.\footnote{Assuming that we have a redundant manipulator, there is an infinite set of IK solutions for each task-space point.} 
This further demonstrates the key limitation of the approach by Holladay et al.~\cite{holladay2016distance} which consider for each location $\alpha$ only one configuration.

\begin{wrapfigure}{r}{0.5\columnwidth}
\vspace{-5.5mm}
\centering
   \includegraphics[width=5cm]{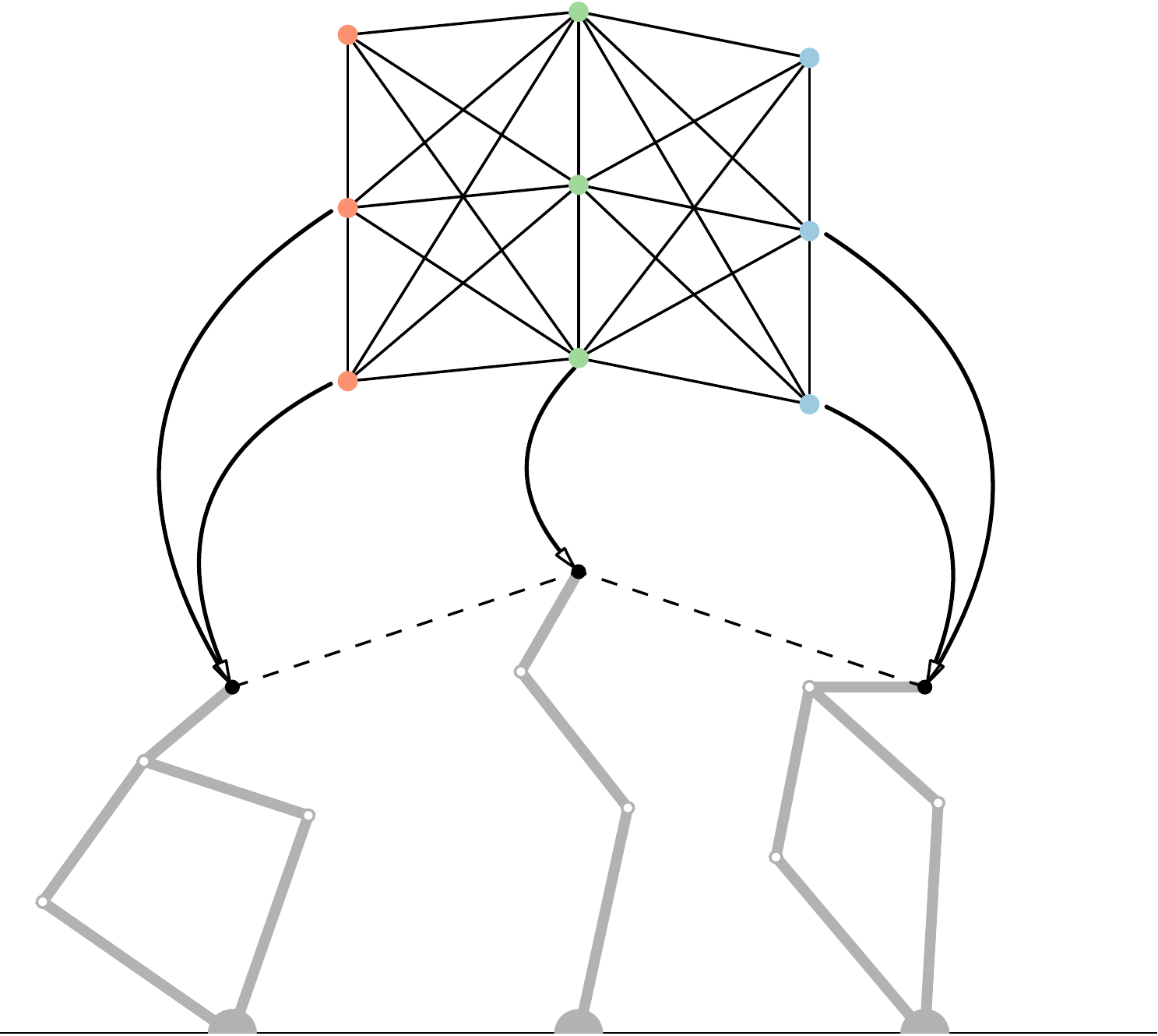}
   \caption{Each layer in our layered graph (top) maps to a task-space pose (bottom) along our reference path, shown as the dotted line. For each pose, there are multiple IK solutions, which make up the elements of each layer.}
  \vspace{-1.5mm}
\label{fig:layered_arms}
\end{wrapfigure}
Following the discussion above, we construct a layered graph $L = (V_{L}, E_{L})$ embedded in \cspaceWord where each layer is a set of IK solutions for one task-space point lying on the reference path~(\figref{fig:layered_arms}). 

To construct our graph, we begin by sampling $n$ waypoints in task space along our reference path $\{w_{1}\ldots w_{n}\}$. 
At each waypoint $w_{j}$, we initially compute up to~$k$ IK solutions $\{q_{j}^{1}\ldots q_{j}^{k}\}$ by querying random solutions from an analytical IK solver (IKFast~\cite{diankov2008openrave}).
Each configuration $q_{j}^{i}$ is a vertex in our graph~$L$. 
Namely, 
\begin{equation}
V_{L} = \{ q^{i}_{j} | 1 \leq j \leq n \text{ and } 1 \leq i \leq k \}.
\end{equation}

We next define our edge set, $E_{L}$.
Each vertex in a layer of IK solutions connects to every vertex in the subsequent layer and to every vertex in its own layer. 
Intuitively the path passes through every waypoint, with the freedom to select any IK solution for that waypoint. More formally,  
\begin{equation}
\begin{split}
E_{L} = &\{ (q_{j}^{i_{1}}, q_{j+1}^{i_{2}}) | 1 \leq j < n-1, 1 \leq i_{1}, i_{2} \leq k\} \\ 
        &\cup \{ (q_{j}^{i_{1}}, q_{j}^{i_{2}}) | 1 \leq j \leq n, 1 \leq i_{1}, i_{2} \leq k\}.
\label{eqn:layered_edge_definition}
\end{split}
\end{equation}

As mentioned, each edge is a straight-line segment in \cspaceWord between the two configurations, which does not necessarily lie on $\Man$.\footnote{In our implementation, we delay collision checking of edges along these paths.} To account for deviation from  $\Man$ in our discrete \frechet calculation, we subsample each edge. 
As we increase the subsampling resolution during our densification process, our discrete \frechet distance will better approximate the continuous \frechet distance~\cite{wylie2013discrete}.

\subsection{Na\"{i}ve Search Method}

Given our layered graph~$L$, let $\Xi_{L}$ denote the set of all paths in $L$ that connect any vertex in the first layer of $L$ to any vertex in the last layer of $L$.
We can restate \eref{eqn:pdef2} as
\begin{equation}
	\xi^*_L = \arg \min_{\xi \in \Xi_L} \frechetd( \text{FK}(\xi), \bar{\xi}).
\label{eqn:pdef3}
\end{equation}

Note that \frechet is a metric over entire paths, not path segments (i.e., individual edges), and thus we cannot simply search $L$ in a Dijkstra-like manner. 
One na\"{i}ve option would be to enumerate all candidate paths in $\Xi_{L}$ and compute $\frechetd(\bar{\xi}, \text{FK}(\xi_{L}))$ for all $\xi_{L}\in\Xi_{L}$. 
However, $|\Xi_{L}| =  \Omega({n^{k}})$. 
Instead, we adapt a method that computes the cross product between our layered graph and the reference path, allowing us to efficiently search for the minimal-cost path in $L$ in $O(n^2 k^2 \log (nk))$ time. 


\section{Computing the Minimal-Cost Path}
\label{sec:cross_product_search}
To efficiently compute a solution to \eref{eqn:pdef3} we represent $\bar{\xi}$ as a (one-dimensional) graph $G_{\bar{\xi}} = (V_{\bar{\xi}}, E_{\bar{\xi}})$ where $V_{\bar{\xi}}$ are the sampled waypoints of $\bar{\xi}$ and an edge $e \in E_{\bar{\xi}}$ connects two subsequent waypoints. 
This allows us to view both $L$ and~$G_{\bar{\xi}}$ as abstract one-dimensional simplicial complexes\footnote{An abstract simplicial complex is a combinatorial description of a simplicial complex---a 
set composed of points, line segments, triangles, and their n-dimensional counterparts~\cite{munkres1984elements}.}.
Har-Peled and Raichel introduced an algorithm for computing the \frechet distance between two such complexes by considering their cross product~\cite{har2014frechet}.
Therefore, our instance is a restricted case of their problem and we present our adaptation.
Following \figref{fig:fig1}, we first create a new graph
$\Phi = L \times G_{\bar{\xi}}$  
and then use it to solve \eref{eqn:pdef3}.

\subsection[Cross-Product Graph]{Cross-Product Graph~$\Phi$}
\label{sec:cp_graph}
In this section, we define for $\Phi$ the set of vertices~$V_\Phi$, edges~$E_\Phi$ and their costs.
Set~$V_\Phi = V_{\bar{\xi}} \times V_L$.
Namely, each vertex in $V_\Phi$ is a pair $(w, q)$ such that $w \in V_{\bar{\xi}} $ and $q \in V_L$.
An edge connects two vertices in $V_L$ if either or both elements of each vertex are adjacent to each other in their respective graph.
Namely,
\begin{equation}
\begin{split}
    E_{\Phi} = &\{ ((w_{m_{1}}, q^{i_{1}}_{j_{1}}), (w_{m_{2}}, q^{i_{2}}_{j_{2}})) | \text{ if} \\
& ((w_{m_{1}} = w_{m_{2}}) \text{ and } (q^{i_{1}}_{j_{1}}, q^{i_{2}}_{j_{2}}) \in E_{L}) \text{ or } \\
& ((w_{m_{1}}, w_{m_{2}}) \in E_{\bar{\xi}} \text{ and } (q^{i_{1}}_{j_{1}} = q^{i_{2}}_{j_{2}})) \text{ or } \\
& ((w_{m_{1}}, w_{m_{2}}) \in E_{\bar{\xi}} \text{ and } (q^{i_{1}}_{j_{1}}, q^{i_{2}}_{j_{2}}) \in E_{L} ) \}.
\label{eqn:cross_product_edges}
\end{split}
\end{equation}

Set $\text{cost}(w,q) = d_{TS}(w, \text{FK}(q))$ 
to be the cost\footnote{Har-Peled and Raichel~\cite{har2014frechet} use the term ``elevation'' to refer to our notion of cost.} of a vertex $(w, q) \in V_\Phi$.
The cost of an edge $e = (u,v)$ is the maximum of the cost of its endpoints. 
Namely, $\text{cost}(u,v) = \max (\text{cost}(u), \text{cost}(v))$.


The cost of a path in $\Phi$ is defined as the \emph{maximal} edge cost along this path,
also known as the ``bottleneck cost''. Har-Peled and Raichel show that the cost  of such a path is equal to the \frechet distance between the corresponding curves in the two simplicial complexes that compose the product space~\cite{har2014frechet}. 
In other words, the cost of a path in the cross-product graph $(w_{1}, q_{1})\ldots (w_{n}, q_{n})$ corresponds to the discrete \frechet distance between the discretized reference path
$\{w_{1}\ldots w_{n}\}$ 
 and the FK of the polyline 
$\{q_{1}\ldots q_{n}\}$ in the layered graph. 
Thus, our goal can be restated as finding the bottleneck shortest path in $\Phi$.

\subsection{Computing the Bottleneck Shortest path}
Computing the bottleneck shortest path in a graph $G = (V,E)$ is a well-studied problem and there are efficient algorithms that run in time linear in $|E|$~\cite{har2014frechet}.
However, we chose to use a simple variant of Dijkstra's algorithm~\cite{dijkstra1959note} (with complexity $\bigoh{|E_{\Phi}|+|V_{\Phi}|\log|V_{\Phi}|}$) because we have found it to be empirically faster and it allows for more efficient updates to $\Phi$, as described in \sref{sec:densification}.

Standard implementations of Dijkstra's algorithm assume that the cost of a path to vertex $v$ coming from vertex $u$ is the cost to reach $u$ plus the cost of the edge~$(u, v)$. 
To compute the bottleneck cost, we simply swap the sum of costs for a $\max()$ such that the cost$(v)$ = $\max($cost$(v)$, cost$(u, v))$. 

Given the bottleneck shortest path in $\Phi$, we can extract the corresponding path in the layered graph to generate~$\xi_{L}$ that optimizes \eref{eqn:pdef3}.
While searching, we lazily evaluate the edges in $\xi_{L}$ for collision~\cite{dellin2016unifying,HMPSS18}. 


\section{Densification}
\label{sec:densification}

Following the construction of the cross-product graph $\Phi$, we want to iteratively improve 
(i)~the quality of our solution and 
(ii)~the accuracy of our approximation of the continuous \frechet distance. 

To improve the quality of our solution, we \textit{densify} our layered graph $L$ to provide more candidate paths to search over.
The two parameters of our layered graph (number of layers and number of IK solutions in each layer) suggest two approaches: we can either add another layer to~$L$ by choosing a new waypoint along $\bar{\xi}$ and sampling $k$ IK solutions of this waypoint or we can increase the size of an existing layer in~$L$ by sampling more inverse kinematic solutions at an existing waypoint. 
To improve our approximation of the continuous \frechet distance, we increase the subsampling resolution along edges of our two structures,~$G_{\bar{\xi}}$ and $L$. 
Given updates to $G_{\bar{\xi}}$ or~$L$, we then update our cross-product graph $\Phi$ accordingly. 

To summarize, given these two objectives we have defined three densification methods: 
(i)~adding a layer to~$L$, 
(ii)~adding IK samples to an existing layer of $L$ and 
(iii)~increasing subsampling resolution of an edge of $L$ or $G_{\bar{\xi}}$. 
Given these three densification methods, we present several strategies on how to apply them followed by experimental comparisons.

\subsection{Densification Strategies}
\label{sec:densification_strategies}
Our strategies on where to apply our densification methods are inspired by the PRM literature, which balance global and local updates~\cite{kavraki1994randomized,kavraki1996probabilistic,bohlin2000path}. 

Global updates sample either $L$ or $G_{\bar{\xi}}$ uniformly to determine where to add a layer, which layer to augment or which edge to increase the sampling resolution of. 
Local updates are applied in the neighborhood of the \textit{bottleneck node} along the current best path in $\Phi$, where the \frechet distance is the largest. 
From the \frechet distance analogy, this is where we have the longest leash between a point in the layered graph $L$ and a point in the reference graph $G_{\bar{\xi}}$\footnote{This is similar to the stapling method described in \cite{holladay2016distance} in that both leverage the \frechet distance to heuristically focus effort to improve the quality of the current solution.}.

Within a single step of densification, we first determine whether to use local or global updates and then pick a densification method uniformly.
Having densified either $G_{\bar{\xi}}$ or $L$ and updated $\Phi$ accordingly, we search $\Phi$ for best current solution. 
Our Dijkstra-like search of~$\Phi$ is thus an instance of a dynamic shortest-path problem which allows us to use efficient algorithms that reuse information from previous search episodes~\cite{frigioni2000fully,ramalingam1996computational,koenig2004lifelong}. 
This loop is illustrated in \figref{fig:fig1}. 

We present two strategies for determining whether to use global or local updates and proceed to empirically evaluate their performance. 

\begin{description}
	\item[Hybrid Strategy] trades off between local and global updates by choosing local updates with probability~$p$. The values of $p=0$ and $p=1$ correspond to purely local updates and purely global updates, respectively.
	\item[Local-then-Global Strategy] combines local and global methods by reasoning about the progress made across multiple densification steps.  
We use local updates as long as they continue to improve the current best solution. 
Once~$m$ successive iterations of local updates do not decrease the bottleneck cost, we switch to performing global updates. 
If global updates reduce our cost, this strategy returns to applying local updates.
\end{description}

\subsection{Experimental Comparison of Densification Strategies}
\label{sec:experiment_densification}
To compare densification strategies, we use the bimanual manipulator HERB to generate 100 instances of layered graphs for a given reference path $\bar{\xi}$, all with the same initial number of waypoints, IK solutions per waypoint, and level of subsampling resolution. 
For each problem we randomly place rectangular boxes in the vicinity of the robot. 
We then conduct many iterations of densification.
We repeat this process for multiple parameter settings and reference paths. 
While a summary is given below, there are more detailed experimental results available in~\textVersion{the extended version of this paper~\cite{holladay2017minimizing}}{Appendix~\ref{apndx:results}}.

Our two strategies, hybrid and local-then-global, each have one parameter.
We compare discrete choices of the parameters to select the best one. 
For the hybrid strategy we compare $p$-values in the set $\{0, 0.25, 0.5, 0.75, 1\}$ and observe that lower $p$-values (namely, biasing local updates), produce paths with a shorter \frechet distance at each iteration. 
For the local-then-global strategy (referred to as, L-t-G) we compare $m$-values in the set $\{2, 3, 4, 5, 6\}$ and observe that mid-range $m$-values produce the best-quality results.
Therefore, in comparing our two densification strategies, we used $p=0.25$ for the hybrid strategy and $m=5$ for the local-then-global strategy (\figref{fig:all_methods}). 

These results indicate that, while the \frechet distance is a metric over entire paths and global updates are required, it is benficial to heuristically guide the densification process in the neighborhood of the local bottleneck. 

For both strategies, most of the computation time is spent collision checking the path segments, with some smaller fraction spent computing the cost of nodes in the cross-product graph.  
Before empirically comparing our method to alternative algorithms, we first provide a proof sketch of its asymptotic optimality. 

\section{Asymptotic Optimality}
\label{sec:proof}

In this section we state our main theoretic result and provide a proof outline.
We show that, under some technical assumptions, our algorithm is asymptotically optimal.
To do so, we assume that \Man, the set of all configurations that directly map to~\Rpath, contains a ``well-behaved'' portion.
This notion, together with the proof are detailed in \textVersion{the extended version of this paper~\cite{holladay2017minimizing} as well as in the supplementary material}{Appendix~\ref{apndx:proof}}.

\begin{thm}
\label{thm:ao}
If $\Man$ contains a well-behaved portion~\WellMan  then
our algorithm is asymptotically optimal.
Namely, as $n\rightarrow \infty$ and $k\rightarrow\infty$ it will asymptotically find a collision-free \cspaceWord path whose \frechet distance from $\Rpath$ tends to zero.	
\end{thm}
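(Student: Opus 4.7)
The plan is to decompose the claim into three ingredients: (i)~an exactness statement that, for any fixed $L$ and $G_{\Rpath}$, the Dijkstra-style bottleneck search on $\Phi$ returns a path in $L$ that minimizes the discrete \frechet distance to \Rpath; (ii)~a coverage statement that, as $n\to\infty$ and $k\to\infty$, $L$ almost surely contains a polyline whose forward kinematics $\varepsilon$-approximates a continuous collision-free \cspaceWord curve lying on \WellMan that projects onto \Rpath; and (iii)~a discretization statement that, as the edge subsampling is refined, the discrete \frechet distance of this polyline converges to the continuous \frechet distance of the underlying curve. Ingredient (i) follows from Sec.~\ref{sec:cross_product_search} together with the Har-Peled--Raichel equivalence between bottleneck paths in the cross-product complex and the discrete \frechet distance~\cite{har2014frechet}; since densification only adds vertices and edges, the cost of the optimal graph solution is monotonically non-increasing in $n$ and $k$. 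Ingredient (iii) is the standard convergence of the discrete \frechet distance to the continuous one on sufficiently refined polylines~\cite{wylie2013discrete}, which is exactly what our third densification method is designed to guarantee.

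For ingredient (ii) I would first unpack the well-behaved hypothesis. I expect \WellMan to be required to be a compact, connected subset of \Man with (a)~strictly positive collision clearance in \cspaceWord, (b)~bounded curvature as a submanifold, and (c)~an IK sampler whose restriction to $\IK(\Rpath(\alpha))\cap\WellMan$ has density bounded away from zero uniformly in $\alpha\in[0,1]$. Under these hypotheses, a standard covering argument shows that, as $k\to\infty$, the $k$ IK samples at each waypoint almost surely include a configuration within any prescribed $\varepsilon$ of a fixed continuous lift $\eta:[0,1]\to\WellMan$ of \Rpath. The positive clearance and bounded curvature of \WellMan, together with Lipschitz continuity of \FK, then imply that for $n$ large enough the straight-line \cspaceWord edges between such consecutive samples remain collision-free and their forward-kinematic images stay within an $\bigoh{1/n}+\varepsilon$ tube around \Rpath. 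Combining (i), (ii), and (iii) yields a path in $L$ whose discrete \frechet cost tends to zero, and by (i) the algorithm returns one no worse.

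The main obstacle will be ingredient (ii), in particular the argument that straight-line interpolants in \cspaceWord between IK samples on adjacent layers simultaneously remain collision-free and track \Rpath closely. This is where the well-behaved assumption must do real work: without bounded curvature and clearance of \WellMan, a \cspaceWord chord between two nearby samples on \Man can leave the manifold, collide with an obstacle, or produce an FK image that arcs far from \Rpath. I would handle this by choosing $n$ large enough that consecutive samples fall inside a tubular neighborhood of \WellMan on which the nearest-point projection is a diffeomorphism, and then bounding the FK deviation by the tube radius, which shrinks with the waypoint spacing. A secondary subtlety will be decoupling the local densification heuristic from the probabilistic coverage guarantee; I expect this to be addressed by observing that the global-update step of both densification strategies fires either with positive probability (hybrid) or infinitely often (local-then-global), which preserves the uniformity assumption on IK sampling needed for the covering argument above.
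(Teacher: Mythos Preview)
Your three-ingredient decomposition is sound and matches the paper's overall strategy: the paper too relies on the Har-Peled--Raichel correspondence for (i), a covering/probabilistic argument for (ii), and a metric-continuity bound for the \frechet distance. The high-level skeleton---fix a continuous lift on~\WellMan, cover it by small balls, argue that each ball is hit by a sample with high probability, then use clearance to certify collision-freeness of the interpolants---is exactly what the paper does (explicitly following~\cite[Thm.~34]{karaman2011sampling} and concluding with Borel--Cantelli).

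Where your route diverges is in the technical hypotheses you place on~\WellMan and in how you handle the two-stage sampling. You assume bounded curvature and a tubular-neighborhood diffeomorphism to control the \cspaceWord chords, and you posit a single ``IK-sampler density bounded away from zero'' condition. The paper instead separates the two sampling stages sharply: one assumption (their~A4) lower-bounds the fraction of~\Rpath that the FK-image of a small \cspaceWord ball covers, so that \emph{waypoint} sampling hits the right task-space neighborhoods; a second property (their~P2) lower-bounds the measure of the self-motion manifold near any configuration, so that \emph{IK} sampling at that waypoint lands in the ball. These two bounds are combined additively in the failure probability before invoking Borel--Cantelli. Your single density assumption would need to be unpacked into something like this pair to actually push the argument through, since the algorithm never samples~\Man directly---it samples waypoints on~\Rpath first and then IK solutions at those waypoints, and either stage can fail independently. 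Your tubular-neighborhood argument is a reasonable alternative to the paper's direct use of a Lipschitz-type property of~FK (their~P1) plus a crude triangle-inequality bound on the \frechet distance; the paper's version is less geometric but requires fewer smoothness hypotheses. Finally, your ingredient~(iii) on discrete-to-continuous \frechet convergence is not formalized in the paper's proof at all---the appendix argument stays entirely at the level of~$\frechetd$---so your version is in that respect more complete.
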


\subsection{Proof sketch}
Our proof relies on certain properties (detailed in \textVersion{the extended version of this paper~\cite{holladay2017minimizing} as well as in the supplementary material}{Appendix~\ref{apndx:proof}}) which hold for any redundant manipulator.
Roughly speaking, we require that there is some  correlation between distances in \cspaceWord and distances in task space. 
This is required to ensure both that (i)~connecting close-by samples on~$\Man$  will lead to minimizing the \frechet distance and that (ii)~sampling close-by points in task space can yield close-by configurations, given enough IK solutions.  

Our proof sketch is similar in nature  to~\cite[Thm.~34]{karaman2011sampling}. 
We assume that there exists some path~$\xi$ lying on~\WellMan that directly maps to \Rpath.
Namely, we have that $\frechetd\left(\text{FK}(\xi), \Rpath\right) = 0$.
We will show that there exists a sequence of a family of paths $\{ \Xi_n\}_{n \in \mathbb{N}}$ such that any sequence of paths $\{\xi_n \in \Xi_n\}_{n \in \mathbb{N}}$ converge  to $\xi$ (recall that~$n$ is the number of sampled waypoints along the reference path $\Rpath$).
For each path $\xi_n \in \Xi_n$ we show that there exists some $\varepsilon_n$ such that
$$ \frechetd{(\text{FK}(\xi_n), \Rpath)} \leq \varepsilon_n $$
and that 
$$ \lim_{n \rightarrow \infty} \varepsilon_n = 0. $$
Furthermore, if $P_n$ is the probability that our algorithm will produce a collision-free path in $\Xi_n$ then we will show that 
\begin{equation*}
\lim_{n \rightarrow \infty} P_n = 1.
\end{equation*}
Combining the above will yield that our algorithm is asymptotically optimal.
For a figure depicting the notation used throughout our proof, see~\figref{fig:proof}.

\begin{figure}
	\centering
	\includegraphics[width=0.99\columnwidth]{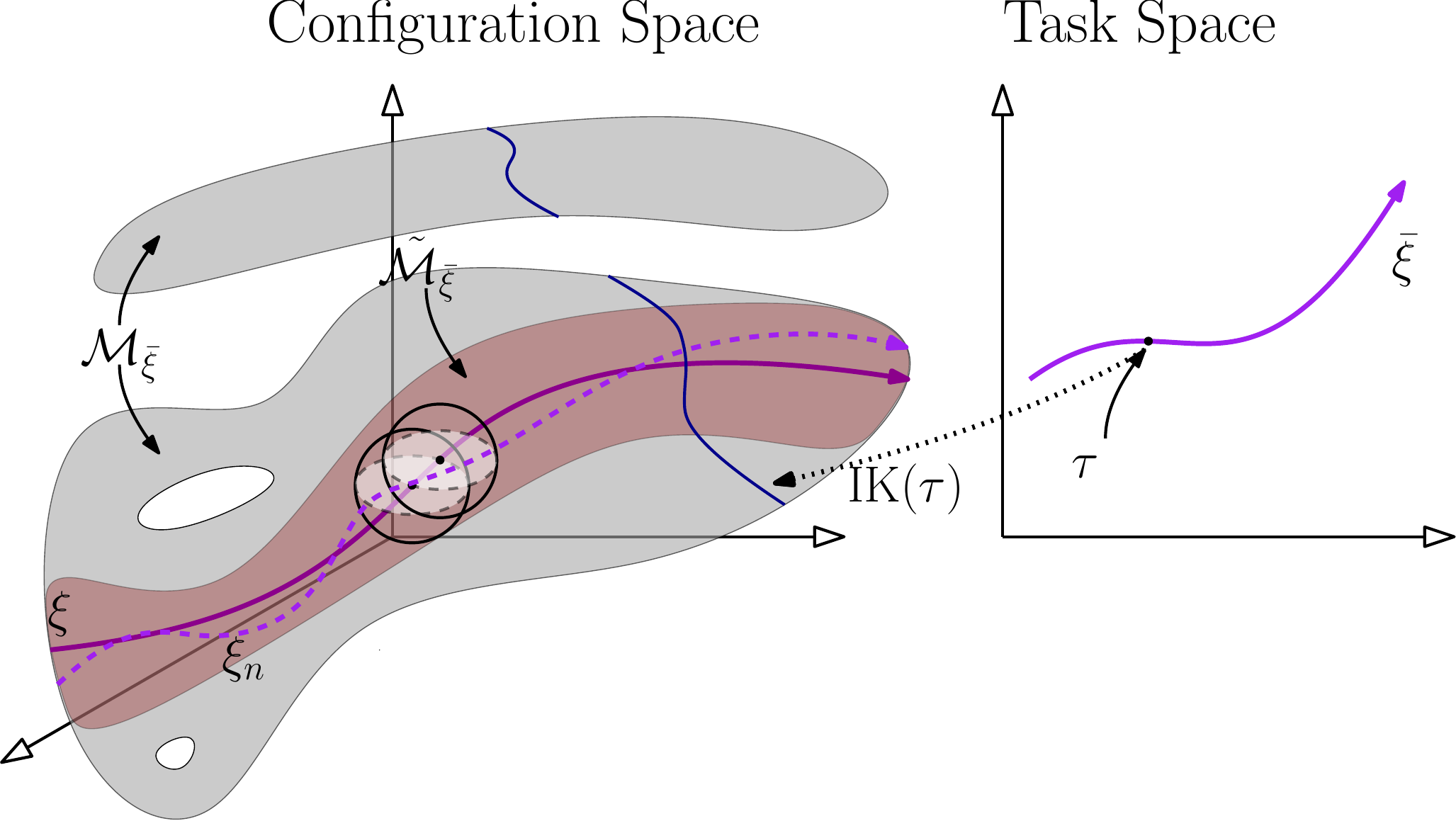}
	\caption{Visualization of the notation used in proof sketch.}
	\label{fig:proof}
\end{figure}

\subsection{Discussion}
One assumption that we take implies that there is a path in \cspaceWord that directly maps to the reference path.
This is due to our sampling scheme which requires that we only sample \emph{on} the reference path and not around it.
If this assumption does not hold, an algorithm that minimizes the \frechet distance cannot restrict itself to sampling only on the reference path.

An interesting difference between our proof and existing asymptotic-optimality proofs such as~\cite[Thm.~34]{karaman2011sampling} is that our algorithm connects any two vertices in subsequent layers. 
Thus, we did not have to argue about connection radius but about the rate at which we sample waypoints and IK solutions. It would be interesting to alter the algorithm to only connect vertices in subsequent layers if they are within some predefined distance. This would require adding this constraint to the proof of Thm~.\ref{thm:ao}.


\section{Experimental Results}
\label{sec:results}

\begin{figure*}[!t]
    \centering
    \minipage{0.32\textwidth}
        \includegraphics[width=\textwidth]{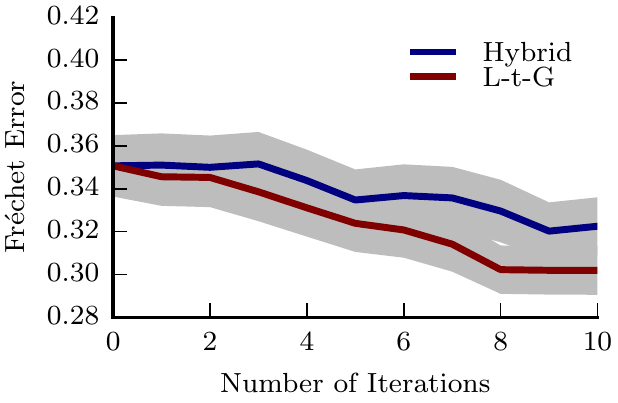}
        \subcaption{Densification Strategies} \label{fig:all_methods}
    \endminipage 
    \minipage{0.32\textwidth}
        \includegraphics[width=\textwidth]{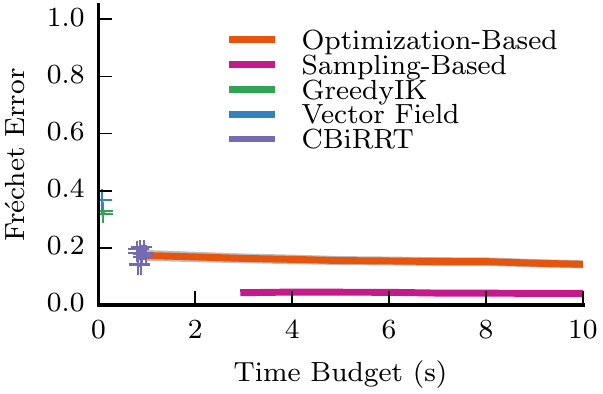} 
        \subcaption{Straight-line path} \label{fig:line_compare}
    \endminipage
    \minipage{0.32\textwidth}
        \includegraphics[width=\textwidth]{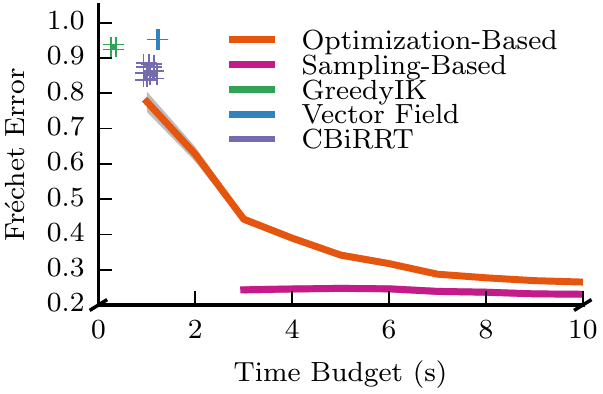} 
        \subcaption{Random path} \label{fig:algo_compare}
    \endminipage
\caption{Empirical evaluation.
	(a)~A comparison of our densification strategies.
	(b,c)~A comparison of our algorithm with state-of-the-art planners on a straight-line path and a random path, respectively.
	While each figure only shows the results for one reference path and initial layered graph sizes, repeated experiments showed these results were consistent across multiple reference paths and graph sizes. }
\vspace{2mm}
\end{figure*} 

We compare our sampling-based algorithm with four other planners: an optimization-based approach~\cite{holladay2016distance}, a vector-field planner~\cite{srinivasa2016system}, a greedy inverse kinematic planner~\cite{srinivasa2016system} and CBiRRT (Constrained Bidirectional RRT)~\cite{berenson2009manipulation}. 

The optimization-based algorithm from \cite{holladay2016distance}, summarized in \sref{sec:trajopt_bug}, continues to split the path into subproblems until the \frechet distance between the entire path and the reference path is below some threshold value\footnote{In \cite{holladay2016distance} this is referred to as "stapling in task space".}. 
We adapt this to an anytime algorithm where an entire path is produced and evaluated after each split.
The vector-field planner integrates a Jacobian pseudo-inverse to follow a vector field defined by our path~\cite{srinivasa2016system}.
The greedy inverse kinematic planner (GreedyIK) samples IK solutions from $\bar{\xi}$ and attempts to interpolate between them in \cspaceWord~\cite{srinivasa2016system}. 
CBiRRT plans on constraint manifolds by projecting random samples to our manifold $M_{\bar{\xi}}$~\cite{berenson2009manipulation}. 
The algorithm is set to only accept projected samples if they fall within some threshold distance $\kappa$ of any point on the reference path.

We use the same experimental setup as described in \sref{sec:experiment_densification}, averaging the results of each planner over 100 instances. 
The sampling-based and optimization-based planners both have anytime performance, so we query each planner after $t$ seconds for their best solution so far.
Vector Field, GreedyIK and CBiRRT are treated as single-query planners and therefore do not have anytime performance. 
For CBiRRT, we plot the performance across several thresholds (we use $\kappa$-values in the set $\{0.2, 0.3, 0.4, 0.5\}$). 

We first compare performance of all of the planners for a simple, straight-line path in \figref{fig:line_compare}. As expected, each of the planners does fairly well, with our sampling-based algorithm producing paths closest to the reference path. 

We then test on a variety of paths, some of which are shown in various colors in \figref{fig:many_path}.
As a representative example, we compare performance in \figref{fig:algo_compare} and the progression of the anytime algorithms in \figref{fig:algo_progression} for one particular path. 
Further performance comparisons are available in~\textVersion{the extended version of this paper~\cite{holladay2017minimizing}}{Appendix~\ref{apndx:results}}.

For this path, the single-query planners quickly produce solutions of low quality. CBiRRT produces equally low-quality solution because the projection and threshold constraints do not enforce the monotonicity of traversal that \frechet distance does for our sampling-based algorithm. 

\begin{wrapfigure}{r}{0.5\columnwidth}
\vspace{-5.5mm}
\centering
   \includegraphics[width=4cm]{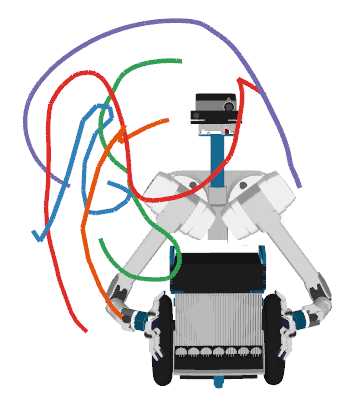}
   \caption{Each color shows one of the many reference paths we evaluated the algorithms on.}
  \vspace{-1.5mm}
\label{fig:many_path}
\end{wrapfigure}
Turning to our anytime algorithms, the optimization-based approach finds an initial solution faster, but its solution is significantly worse than the one found by the sampling-based approach. 
While the optimization-based approach improves its solution at a faster rate, the sampling-based approach produces a higher-quality path for a fixed time budget. 


For both examples our sample-based approach is able to converge to a path that more closely follows the reference path because it searches over sets of IK solutions and leverages the \frechet distance to efficiently search.
It is important to note that, as expected, the quality of the solution obtained by our planner does slightly decrease over time. It is hard to observe this trend in Fig. 5c due to the scale needed to compare with the other planners but this can be observed in Fig. 5a as well as in the \textVersion{extended version of this paper~\cite{holladay2017minimizing}}{Appendix~\ref{apndx:results}}.

\begin{figure}[!tb]
    \centering
    \minipage{0.33\columnwidth}
        \includegraphics[width=\textwidth]{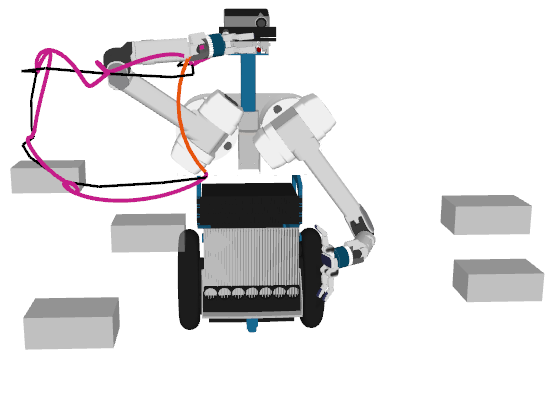}
        \subcaption{Initial Paths} \label{fig:initial_herb}
    \endminipage 
    \minipage{0.33\columnwidth}
        \includegraphics[width=\textwidth]{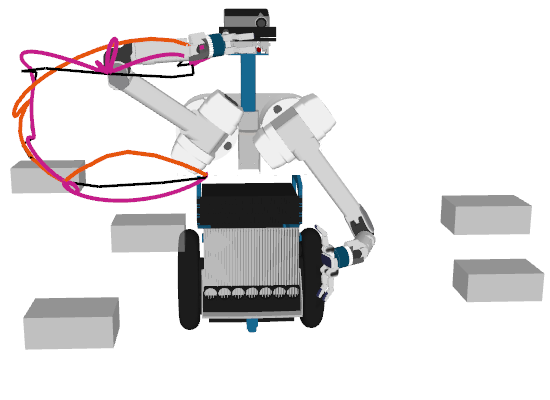}
        \subcaption{Midway Progress} \label{fig:mid_herb}
    \endminipage 
    \minipage{0.33\columnwidth}
        \includegraphics[width=\textwidth]{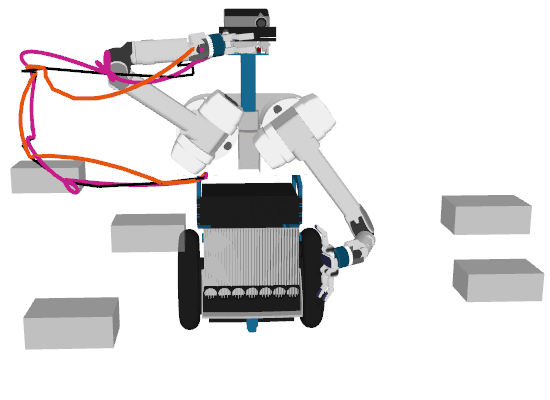}
        \subcaption{Final Paths} \label{fig:final_herb}
    \endminipage 
\caption{We show the progression of the optimization-based approach (orange) and the sample-based approach (pink) as they try to follow the reference path (black). Randomly generated obstacles in the environment are shown in grey. These figures only capture the differences in position, not orientation.}
\label{fig:algo_progression}
\vspace{2mm}
\end{figure}

\section{Future Directions and Discussion}
\label{sec:discussion}
We presented an anytime algorithm that produces a collision-free configuration space path that "follows", according to the \frechet distance, a reference path in task space.
By leveraging the \frechet distance, we were able to organize our space of candidate solutions into a structure that we could efficiently search and incrementally densify.
We outlined several strategies for densifying the structure and provided a proof sketch of asymptotic optimality.
We concluded with a comparison of our algorithm against several state-of-the-art planners across multiple paths and parameter settings. 
Looking forward, we present several future research directions.

In this work we considered how to optimize paths to follow a reference path. 
We did not consider the length of the path in \cspaceWord.
In the future, we wish to focus on the bicriteria optimization problem of simultaneously decreasing both the distance (in task space) from the reference path and the path length (in \cspaceWord). 

In addition to task space positions, we may also want to specify end-effector velocities~\cite{maciejewski1985obstacle} or forces. 
Another variant, originally suggested by \cite{oriolo2002probabilistic} and explored with the Procrustes distance metric in \cite{holladay2016distance}, is to consider paths without a fixed starting point, thus allowing the shape to be translated and rotated in space freely. 

Our algorithm only samples IK solutions on the reference path. 
Given many obstacles, we may want to encourage our path to deviate slightly by sampling solutions from an $\varepsilon$-ball around our reference path. 
This additional flexibility would require revisiting our theoretic guarantees on asymptotic optimality.

Finally, our work draws some parallels to Hauser's recent work on global redundancy resolution~\cite{HE18}.  
Both algorithms create PRM-like structures in configuration space, but our work has focused on pathwise redundancy resolution~\cite{HE18} and we leverage a different search method and optimization criteria. 
We believe that our work, which is complementary to his, may benefit by using his method to pick good IK solutions and leave this as an area of future work.


\section*{ACKNOWLEDGMENT}
This material is based upon work supported by ONR BAA 13-0001 and undergraduate research grants from CRA-W's CREU and CMU's SRC URO programs. We thank Chris Dellin for his assistance with the LPA* implementation. We would also thank the members of the Personal Robotics Lab and the MCube Lab and Ananya Kumar for helpful discussion and advice.

\textVersion{
\addtolength{\textheight}{-12cm}
}{}

{\footnotesize
\bibliographystyle{ieeetr}
\bibliography{references}}

\textVersion{}{
\appendices
\section{Asymptotic Optimality (AO) Proof}
\label{apndx:proof}
In this section we provide our main theoretic result.
For completeness, we repeat the main theorem, stated in Sec. 6.

\subsection{Preliminaries}

Before we can formally state our main theorem, we need to detail some technical assumptions.
For a figure visualizing the different notations used in this section, see Fig.~\ref{fig:proof-full}.
Recall that~$\Rpath$ denotes the task-space reference path and let $\Man = \{ q \in \Cspace \ | \ \text{FK}(q) \in \Rpath\}$ denote the set of all configurations that directly map to~\Rpath.
In our proof we will rely on the assumption that $\Man$ contains a ``well-behaved'' portion, which we now define:

\subsubsection{Well-behaved portions of $\calM$} 
A ``well-behaved'' portion of $\Man$, denoted as~$\WellMan$, is a region that adheres to the following assumptions:

\begin{enumerate}[label={\textbf{A\arabic*}}]
	\item The region~$\WellMan$ is a manifold of dimension~$\text{dim}({\Man})$
(namely, there are no low-dimensional singularities on~$\WellMan$). 
This is required to ensure that our sampling-based approach does not have to sample within zero-measure submanifolds of $\Man$.

	\item There exists some $\delta>0$ such that every path lying on~$\WellMan$ has clearance~$\delta$ from the closest obstacle. 
This is required to ensure that a \cspaceWord path mapping to our reference path exists and, again,  that it can be sampled.

	\item The region~$\WellMan$ is connected and $\exists q_0, q_1\in \WellMan$ such that $\text{FK}(q_0) = \Rpath(0)$ and $\text{FK}(q_1) = \Rpath(1)$.
Namely, there are configurations that map to the start and end of the reference path that lie on~$\WellMan$ and there exists some path connecting the two.
This is required to ensure that there is a path lying on~$\WellMan$ that maps to our reference path.

	\item There exists some function $\eta_4(\varepsilon)$ such that for every~$\varepsilon$ and for every~$q  \in \WellMan$ it holds  that
$$
| \{ \text{FK}(q') \mid q' \in \Man \text{ and }\distC(q,q') \leq \varepsilon
\}| / |\bar{\xi} | = \eta_4(\varepsilon) 
$$
Namely, any ball lying on the well-behaved manifold is mapped to a non-negligible portion of the reference path~$\Rpath$.
Specifically, we assume that there exists some $p>0$ such that $\eta_4(\varepsilon) = \omega(\varepsilon^p)$.

\end{enumerate}

\subsubsection{Redundant manipulator properties}
Our proof will also rely on certain properties which hold for any redundant manipulator. Specifically,

\begin{enumerate}[label={\textbf{P\arabic*}}]
	\item For every $\varepsilon$, there exists some $\eta_1(\varepsilon)$  where $\lim_{\varepsilon \rightarrow 0} \eta_1(\varepsilon) = 0$ such that
	$\forall q,q' \in \Cspace \text{ s.t. } \distC(q,q') \leq \varepsilon$, it holds that,
$$
\distT(\text{FK}(q), \text{FK}(q')) \leq \eta_1(\varepsilon).
$$
Namely, close-by configurations map to close-by points in task space.

	\item For every $\varepsilon$, there exists some $\eta_2(\varepsilon)$ such that
$\forall 
\tau \in \bar{\xi}$ and 
$\forall 
q \in \text{IK}(\tau)$,
$$
|\{
	q'\in \text{IK}(\tau)~s.t.~\distC(q,q') \leq \varepsilon 
\}| / |\{ q'\in \text{IK}(\tau) \}| \geq \eta_2(\varepsilon).
$$
Namely, the portion of the self manifold~\cite{burdick1989inverse} mapped close to a given configuration is a non-negligible portion.
Specifically, let 
$\ell_{\text{SM}}$ be the total variation (length) of the longest self manifold among all task-space points along the reference path.
Thus, we have that,
$
\eta_2(\varepsilon) = \varepsilon / \ell_{\text{SM}}
$.
	
\end{enumerate}

\subsubsection{Distances:}
Finally, for completeness recall that 
$\distT$ and $\distC$ denote our distance metrics in the Task space and configuration space, respectively.

\subsection{Theorem statement}

\begin{figure}
	\centering
	\includegraphics[width=0.99\columnwidth]{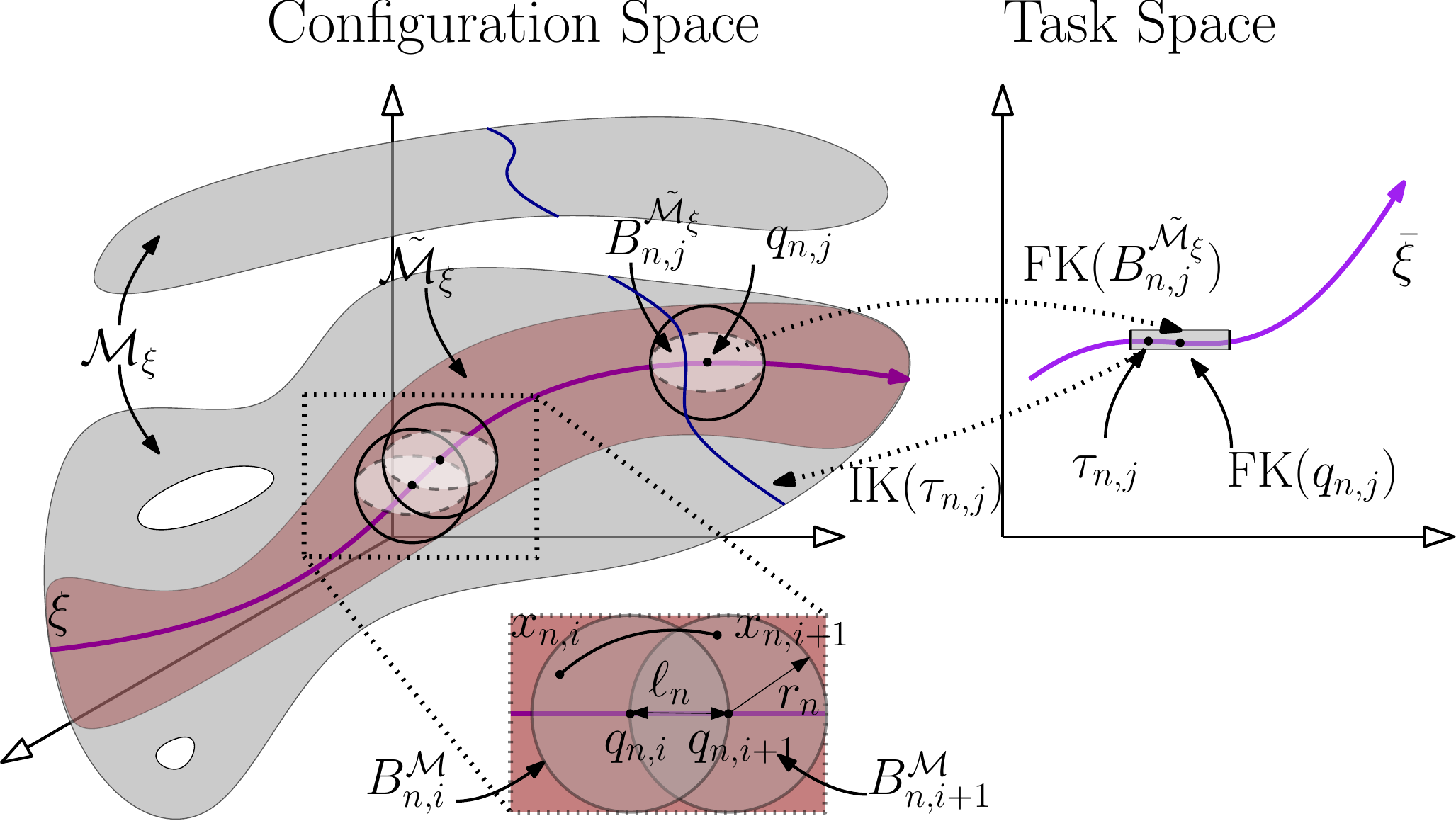}
	\caption{Visualization of the notation used in proof.}
	\vspace{2mm}
	\label{fig:proof-full}
\end{figure}

In this section, we restate our theorem and provide an outline of the proof.

\begin{thm}
\label{thm:ao}
If $\Man$ contains a well-behaved portion then
our algorithm is asymptotically optimal.
Namely, as $n\rightarrow \infty$ and $k\rightarrow\infty$ it will asymptotically find a collision-free \cspaceWord path whose \frechet distance from~$\Rpath$ tends to zero.	
\end{thm}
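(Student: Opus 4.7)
The plan is to mirror the Karaman--Frazzoli sampling-based framework adapted to our product-space setting. I would first invoke assumptions \textbf{A1}--\textbf{A3} to establish a continuous path $\xi$ lying entirely on $\WellMan$ with $\text{FK}(\xi) = \Rpath$ and clearance at least $\delta$ from obstacles. For each sampling level $n$, I would then define the family $\Xi_n$ to consist of all layered-graph paths $q_1^{i_1}, \ldots, q_n^{i_n}$ such that each $q_j^{i_j}$ lies within some radius $r_n$ (measured by $\distC$) of the ideal configuration $\xi(j/n)$. The goal is to show that (i)~every such path has small \frechet distance to $\Rpath$, and (ii)~with probability tending to one a path in $\Xi_n$ actually exists in the randomly sampled layered graph and is collision-free.

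For the deterministic bound, I would use property \textbf{P1}: each selected configuration $q_j^{i_j}$ maps under FK to a point within $\eta_1(r_n)$ of $w_j = \Rpath(j/n)$. The \frechet distance between $\text{FK}(\xi_n)$ and $\Rpath$ would then be decomposed into a pointwise waypoint deviation, bounded by $\eta_1(r_n)$, plus the deviation of each straight-line \cspaceWord edge (which in general departs from $\Man$) from the corresponding arc of $\Rpath$. Because the endpoints of each edge are $\distC$-close and FK is continuous, the interpolated image remains in a task-space neighborhood whose radius vanishes as both $r_n$ and the waypoint spacing $1/n$ shrink. Adding a third term controlled by the edge subsampling resolution, which bridges discrete and continuous \frechet distances, would then yield an $\varepsilon_n$ with $\varepsilon_n \to 0$.

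For the probabilistic bound, I would argue that at each waypoint $w_j$ the probability that a uniformly drawn IK solution lands within $r_n$ of $\xi(j/n)$ is at least $\eta_2(r_n) = r_n / \ell_{\text{SM}}$ by property \textbf{P2}. With $k$ independent samples per waypoint, the probability of no good sample at a fixed waypoint is at most $(1 - \eta_2(r_n))^k$; a union bound over $n$ waypoints gives total failure probability at most $n(1-\eta_2(r_n))^k$. Choosing $r_n$ to decay polynomially in $1/n$ and tying $k$ to $n$ so that $k \cdot \eta_2(r_n) / \log n \to \infty$ forces $P_n \to 1$. Collision-freeness then follows from \textbf{A2}: by ensuring $r_n$ is eventually smaller than the modulus of continuity of FK at clearance $\delta$, every sampled configuration and every chord between consecutive ones stays inside the collision-free $\delta$-tube around $\xi$. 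Assumption \textbf{A4} additionally guarantees that waypoint samples along $\Rpath$ are not clustered in a way that leaves portions of the reference path uncovered.

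The main obstacle I anticipate is the analysis of the straight-line edges in \cspaceWord, which depart from the manifold $\Man$. One must argue that the FK image of the chord between two $r_n$-close IK solutions stays within a task-space neighborhood of $\Rpath$ whose radius vanishes with $n$; here the ``well-behaved'' assumption (no singularities, \textbf{A1}) does most of the work, giving smooth local parameterizations of $\Man$ against which chord-manifold deviation can be controlled by a second-order estimate. The second delicate point is coordinating the three limits---waypoint count $n$, IK sample count $k$, and edge subsampling resolution---so that all error terms vanish simultaneously, which is where the polynomial lower bound $\eta_4(\varepsilon) = \omega(\varepsilon^p)$ in \textbf{A4} provides the slack needed for the union bound to close.
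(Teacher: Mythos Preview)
Your outline closely tracks the paper's: existence of $\xi \subset \WellMan$ from \textbf{A1}--\textbf{A3}, a covering-ball construction around $\xi$, a Borel--Cantelli/union-bound argument that every ball is eventually sampled, collision-freeness from the $\delta$-clearance of \textbf{A2} once $r_n \leq \delta/2$, and a \frechet bound driven by \textbf{P1}. The deterministic and collision-freeness pieces are essentially the paper's Lemmas~1 and~3. Where you diverge is in the probabilistic coverage step, and the divergence is a real gap.

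You index your balls by the $n$ waypoints, center the $j$-th ball at $\xi(j/n)$, and invoke \textbf{P2} with $q=\xi(j/n)$ to obtain the per-layer failure bound $(1-\eta_2(r_n))^k$. But \textbf{P2} applies only to $q\in\text{IK}(\tau)$, so you are tacitly assuming $\text{FK}(\xi(j/n))=w_j$, i.e.\ that the $j$-th sampled waypoint sits exactly at $\Rpath(j/n)$. The algorithm does not do this: waypoints are sampled along $\Rpath$ and densification inserts layers at non-uniform locations. The paper therefore decouples the ball count from $n$: it lays $M_n\approx \ell/r_n$ balls along $\xi$ at \cspaceWord spacing $r_n$ and, for each ball, bounds the failure probability by \emph{two} terms,
\[
P_{n,i}^c \;\leq\; \bigl(1-\eta_4(r_n/2)\bigr)^{n} \;+\; \bigl(1-r_n/\ell_{\text{SM}}\bigr)^{k},
\]
the first (via \textbf{A4}) controlling whether any of the $n$ waypoint samples lands in $\text{FK}(B_{n,i}\cap\Man)$, the second (via \textbf{P2}) controlling whether an IK draw at such a waypoint lands in $B_{n,i}$. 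Your remark that ``\textbf{A4} guarantees waypoint samples are not clustered'' gestures at the first stage, but \textbf{A4} must enter the probability bound itself; without it your union bound $n(1-\eta_2(r_n))^k$ is bounding the wrong event. The polynomial growth $\eta_4(\varepsilon)=\omega(\varepsilon^p)$ is precisely what lets one pick $r_n\to 0$ with $n\,\eta_4(r_n/2)\to\infty$, so that both terms (and their sum over $M_n$ balls) vanish---this is the ``slack'' you allude to, but it belongs in the coverage estimate, not only at the end.
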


\subsection[The Family of Paths]{The Family of Paths $\Xi_n$}
\label{sec:seq}
For each $n \in \mathbb{N}$ we construct a set $B_n = \{B_{n,1}, B_{n,2}, \ldots, B_{n,M_n} \}$ of $M_n$ overlapping balls in \cspaceWord, each with radius $r_n$ that collectively ``cover'' our chosen path $\xi$.
Let $q_{n,i} \in \xi$ denote the center of the $i$'th ball $B_{n,i}$.
The sequence $B_n$ is defined such that:
(i)~the center $q_{n,0}$ of $B_{n,0}$ is $\xi(0)$,
(ii)~the center $q_{n,M_N}$ of the last ball $B_{n,M_n}$ is $\xi(1)$,
(iii)~the distance between the centers of two consecutive balls (except, possibly the last) is exactly $\ell_n = r_n$ 
and that
(iv)~$\lim_{n \rightarrow \infty } r_n = 0$.

Set $B_{n,i}^{\Man} = B_{n,i} \cap \WellMan$
and set
$X_n = \{x_{n,1}, x_{n,2}, \ldots, x_{n,M_n} \}$ to be a sequence of configurations such that $\forall i \ x_{n,i} \in B_{n,i}^{\Man}$.
Namely, the $i$'th  configuration lies on the intersection of~$\WellMan$ and the $i$'th ball $B_{n,i}$.
Each such sequence~$X_n$ induces a path $\xi_n \in \Cspace$ connecting consecutive points of $X_n$.
Let~$\calX_n$ be the set of all such sequences and~$\Xi_n$ the set of all such paths.

\begin{lem}
\label{lem:collision-free}
	$\exists n_0$ s.t. 
	$\forall \xi_n \in \Xi_n$
	it holds that $\xi_n$ is collision free  for $n \geq n_0$.
\end{lem}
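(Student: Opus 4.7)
The plan is to leverage the clearance property A2 of $\WellMan$ together with the fact that the covering radius $r_n \to 0$. Informally, for sufficiently large $n$ the entire polyline $\xi_n$ is squeezed into a tube of width $O(r_n)$ around the reference \cspaceWord path $\xi \subset \WellMan$; since $\xi$ has clearance $\delta>0$ by A2, taking $n$ large enough forces this tube to lie inside the $\delta$-neighborhood of $\xi$, and hence to be obstacle-free.

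First I would record the vertex-level bounds. By construction, $x_{n,i} \in B_{n,i}$ so $\distC(x_{n,i}, q_{n,i}) \leq r_n$, and consecutive centers on $\xi$ satisfy $\distC(q_{n,i}, q_{n,i+1}) \leq \ell_n = r_n$. The triangle inequality then yields $\distC(x_{n,i+1}, q_{n,i}) \leq 2r_n$, so both endpoints of the $i$-th edge of $\xi_n$ lie in the closed $\distC$-ball of radius $2r_n$ around $q_{n,i} \in \xi$.

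Next I would bound the deviation of the straight-line edge in \cspaceWord from $\xi$. Since $\distC$ is induced by a (weighted) norm, its closed balls are convex, so every point of the segment $\overline{x_{n,i}\,x_{n,i+1}}$ stays within $2r_n$ of $q_{n,i}\in\xi$. Concatenating over $i = 1,\ldots,M_n - 1$ shows that $\xi_n$ is contained in the $2r_n$-tube around $\xi$, uniformly over the choice of sequence $X_n \in \calX_n$. Finally, property (iv) of the construction of $B_n$ gives $r_n \to 0$, so I can choose $n_0$ with $2r_n < \delta$ for every $n \geq n_0$. For such $n$, every point of every $\xi_n \in \Xi_n$ lies inside the $\delta$-neighborhood of $\xi$, which is collision-free by A2, giving the lemma.

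The main (mild) obstacle is the convexity step: the \cspaceWord metric is a weighted combination of Euclidean and angular (great-circle) components, and one should justify that the straight-line interpolation used by our layered-graph edges really does stay inside a small metric ball around its endpoints. For balls of radius well below the injectivity radius of the angular factors this reduces to the Euclidean case in a local chart; since $r_n \to 0$, this is automatic from some $n_0$ onward, possibly at the cost of enlarging $n_0$. No other subtlety arises, since the argument is pointwise in $i$ and does not depend on the specific sample $x_{n,i}$ beyond membership in $B_{n,i}^{\Man}$.
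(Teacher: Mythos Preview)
Your argument is correct and follows the same approach as the paper: use $r_n \to 0$ together with the clearance assumption A2 to pick $n_0$ with $r_n \leq \delta/2$ (equivalently $2r_n < \delta$), so that each straight-line edge of $\xi_n$ stays within the $\delta$-ball of a point $q_{n,i}\in\xi\subset\WellMan$ and is therefore collision free. The paper's own proof is considerably terser---it simply asserts that the segment is collision free once $r_n\leq\delta/2$---whereas you have spelled out the triangle-inequality bound on the endpoints and the convexity step that the paper leaves implicit; your caution about convexity of metric balls is reasonable but not strictly needed here, since the layered-graph edges are by construction Euclidean straight lines in \Cspace.
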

\begin{proof}
As $\lim_{n \rightarrow \infty } r_n = 0$, there exists some $n_0$ such that 
$\forall n \geq n_0$ it holds that $r_n \leq \delta/2$. 
Thus, given any two configurations $x, x'$ that lie in consecutive balls it holds that the straight-line segment connecting~$x$ and~$x'$ is collision free.
This, in turn implies that 
$\forall \xi_n \in \Xi_n$, $\xi_n$ is collision free when $n \geq n_0$.
\end{proof}

\ignore{

 and such that 
given any two configurations $x, x'$ that lie in consecutive balls it holds that
(i)~$\frechetd(\text{FK}{(x)},\text{FK}{(x')}) \leq \varepsilon_n$ 
and
(ii)~the straight-line segment connecting~$x$ and~$x'$ is collision free.
These requirements will be satisfied by choosing appropriate values of $r_n$ and $M_n$.

Let 
$\beta \in (0,1)$ be some constant and 
set
$\ell_n = \beta r_n$.
Furthermore, recall that $\xi$ is a path lying on $\WellMan$ and let 
$\ell:=\text{TV}(\xi)$ denote it's total variation (length).
Note that the distance $\ell_n$ between two consecutive centers of balls in $B_n$ is bounded by 
$$
	\ell_n \leq \ell / M_n.
$$

Now, consider any two configurations $x_{n,i}, x_{n,i+1}$ that lie in consecutive balls $B_{n,i}$ and $B_{n,i+1}$, respectively.
It holds that:\\
(i)~the distance in \Cspace between $x_{n,i}$ and $x_{n,i+1}$ is bounded.
Specifically, using the triangle inequality and the fact that $\ell_n \leq r_n $,
\begin{align}
\label{eq:cspace-distances}
 \begin{split}
	\distC(x_{n,i}, x_{n,i+1}) & \leq   \distC(x_{n,i}, q_{n,i}) + \distC(q_{n,i}, q_{n,i+1}) + \distC(q_{n,i+1}, x_{n,i+1}) \\
	& \leq 2 r_n + \ell_n \\
	& \leq 3 r_n
 \end{split}
\end{align}

\noindent
(ii)~the distance in task space between FK$(x_{n,i})$ and FK$(x_{n,i+1})$ is bounded.
Specifically, using \emph{P1} and Eq.~\ref{eq:cspace-distances},
\begin{align}
 \begin{split}
	\distT(\text{FK}(x_{n,i}), \text{FK}(x_{n,i+1})) & \leq  \eta(\distC(x_{n,i}, x_{n,i+1}))\\
	& \leq \eta_1(3 r_n).
 \end{split}
\end{align}
}

\subsection[Sampling a Sequence]{Sampling a Sequence $X_n$}
In Section~\ref{sec:seq} we defined the sequence~$X_n = \{x_{n,1}, x_{n,2}, \ldots, x_{n,M_n} \}$ and showed that the induced path~$\xi_n$ is collision free for $n \geq n_0$.
In this section we show that, asymptotically, our algorithm will sample such sequences almost surely.
Specifically, let $P_n$ denote the probability that our algorithm produces a set of samples~$X_n$ (one in each layer). 

\begin{lem}
\label{lem:pn}
$\lim_{n \rightarrow \infty} P_n = 1$.
\end{lem}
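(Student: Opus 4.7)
\subsection*{Proof proposal for Lemma~\ref{lem:pn}}

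The plan is to show that, under an appropriate joint scaling of $n$ (layers) and $k$ (IK solutions per layer), every one of the $M_n$ covering balls $B_{n,i}$ contains at least one sampled configuration with probability tending to~$1$. I would organize the argument in three stages.

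First I would compute a lower bound on the probability that a single ball~$B_{n,i}$ is ``hit'' by some (waypoint, IK-sample) pair. Fix the center $q_{n,i}\in\WellMan$ and set $\rho_n = r_n/2$. Define
\[
S_i \;=\; \bigl\{\, w\in\Rpath \;\big|\; \exists\, q\in\text{IK}(w) \text{ with } \distC(q_{n,i},q)\le \rho_n \,\bigr\}.
\]
By assumption~A4, the (normalized) measure of $S_i$ along~$\Rpath$ is at least $\eta_4(\rho_n)$, so a uniformly sampled waypoint lies in $S_i$ with probability at least $\eta_4(\rho_n)$. Conditioned on a waypoint~$w\in S_i$ with certifying configuration $q\in\text{IK}(w)$, property~P2 guarantees that a fraction at least $\eta_2(\rho_n)=\rho_n/\ell_{\text{SM}}$ of $\text{IK}(w)$ lies within $\distC$-distance $\rho_n$ of~$q$, and hence within $r_n$ of~$q_{n,i}$ by the triangle inequality. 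With $k$ IK samples drawn at~$w$, the probability that none lands in~$B_{n,i}$ is therefore at most $(1-\eta_2(\rho_n))^k \le e^{-k\eta_2(\rho_n)}$.

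Second I would aggregate over the $n$ layers and $M_n$ balls. Taking the $n$ waypoint draws to be independent, the probability that \emph{no} pair covers ball~$B_{n,i}$ is at most
\[
\bigl(\,1 - \eta_4(\rho_n)\bigl(1 - e^{-k\eta_2(\rho_n)}\bigr)\bigr)^n
\;\le\; \exp\!\Bigl(-n\,\eta_4(\rho_n)\bigl(1 - e^{-k\eta_2(\rho_n)}\bigr)\Bigr).
\]
A union bound over the $M_n = O(\ell/r_n)$ balls yields
\[
P_n \;\ge\; 1 \;-\; \frac{\ell}{r_n}\,\exp\!\Bigl(-n\,\eta_4(\rho_n)\bigl(1 - e^{-k\eta_2(\rho_n)}\bigr)\Bigr).
\]

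Third I would choose the scaling of $(n,k,r_n)$ that drives the right-hand side to~$1$. Since A4 provides $\eta_4(\varepsilon)=\omega(\varepsilon^p)$ for some $p>0$ and P2 gives $\eta_2(\varepsilon)=\Theta(\varepsilon)$, picking, for instance, $r_n = n^{-\gamma}$ for a sufficiently small $\gamma>0$ and $k = k(n)$ growing at least polynomially in $n$ makes both $k\eta_2(\rho_n)\to\infty$ and $n\eta_4(\rho_n)\to\infty$ fast enough to dominate the $\log(1/r_n)$ factor contributed by~$M_n$. I would also need $r_n \le \delta/2$ eventually so that Lemma~\ref{lem:collision-free} applies; this is automatic because $r_n\to 0$.

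The main obstacle I anticipate is purely bookkeeping: verifying that a single choice of scaling simultaneously (i)~covers every ball with probability tending to~$1$, (ii)~keeps~$r_n$ below the clearance~$\delta$, and (iii)~is consistent with the $r_n\to 0$ and $\ell_n=r_n$ conditions used in the construction of~$\Xi_n$. No truly new idea should be required beyond a careful union-bound calculation, because A4 and P2 already provide polynomial lower bounds on the per-ball hitting probability, while $M_n$ only grows polynomially in $1/r_n$. Combining this lemma with Lemma~\ref{lem:collision-free} and the convergence $\frechetd(\text{FK}(\xi_n),\Rpath)\le\varepsilon_n\to 0$ established in the proof sketch of Theorem~\ref{thm:ao} then completes the asymptotic-optimality argument.
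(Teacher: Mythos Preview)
Your proposal is correct and follows the same skeleton as the paper: invoke~\textbf{A4} to lower-bound the chance that a sampled waypoint has an IK solution near~$q_{n,i}$, invoke~\textbf{P2} to lower-bound the chance that one of the $k$ IK draws lands near that solution, and then union-bound over the $M_n=O(\ell/r_n)$ balls. Two bookkeeping differences are worth noting. First, you combine the waypoint and IK events \emph{multiplicatively} per layer, obtaining a single exponential $\exp\bigl(-n\,\eta_4(\rho_n)(1-e^{-k\eta_2(\rho_n)})\bigr)$; the paper instead bounds the per-ball failure by a \emph{sum} of two exponentials, $e^{-n\eta_4(r_n/2)}+e^{-k r_n/\ell_{\text{SM}}}$, treating ``no good waypoint'' and ``good waypoint but all IK samples miss'' as separate failure modes. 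Your product form is slightly tighter, and your explicit use of the triangle inequality with $\rho_n=r_n/2$ in \emph{both} factors is cleaner than the paper's version. Second, the paper pushes on to $\sum_n P_n^c<\infty$ and invokes Borel--Cantelli (mirroring the Karaman--Frazzoli template); since the lemma only asserts $P_n\to 1$, your direct verification that the bound tends to zero is already sufficient.
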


\begin{proof}
To prove Lemma~\ref{lem:pn}, we need to show that, for each ball~$B_{n,i}^{\Man}$,  with high probability we 
(i)~sample a milestone $\tau_{n,i}$ ``close'' to $FK(q_{n,i})$
and
(ii)~sample an IK solution of $\tau_{n,i}$ which lies in~$B_{n,i}^{\Man}$.
This is done, similar to~\cite[Thm.~34]{karaman2011sampling}, by using assumptions \emph{A1-A4} together with the appropriate choice of~$k$ (the number of IK solutions samples per waypoint), the radius of each ball~$r_n$ and the number of balls~$M_n$.
For simplicity, we assume that $\xi$ is not self intersecting.

Let $P_{n,i}^c$ be the event that the single ball, $B_{n,i}$,
does \emph{not} contain a vertex of the graph generated by our algorithm, when the algorithm samples $n$ waypoints and~$k$ IK solutions for each waypoint.
Thus,
$$
	P_{n,i}^c 
	\leq 
	\left(1 -  \eta_4(r_n/2	)\right)^n
	+
	\left(1 -  r_n/\ell_{\text{SM}}\right)^k.
$$
Here, the first component uses assumption \emph{A4} to bound the probability that no waypoint samples can be mapped to a ball at $q_{n,i}$ with radius $r_n / 2$.
The second component uses property \emph{P2} to bound the event that if such a waypoint was sampled, it's IK will not lie in~$B_{n,i}$.
Using the inequality $(1 - x)^y \leq e^{-xy}$,
\begin{equation}
	\forall i~
	P_{n,i}^c 
	\leq 
	e^{-n \eta_4(r_n/2)}
	+
	e^{-k_n r_n / \ell_{\text{SM}}}.
\end{equation}

Let $P_{n}^c$  denote the event that at least one ball does \emph{not} contain a vertex of the graph generated by our algorithm, when the algorithm samples $n$ waypoints and~$k$ IK solutions for each waypoint.
We can be bound~$P_{n}^c$ as follows:
$$
	P_{n}^c \leq \sum_{m=1}^{M_n}P_{n,m}^c = M_n P_{n,1}^c 
	\leq \frac{\ell}{\ell_n} 
	\left( 
		e^{-n \eta_4(r_n/2)}
		+
		e^{-k_n r_n / \ell_{\text{SM}}}
	\right).
$$

\noindent
Using the fact that $\ell_n = r_n$,
$$
P_{n}^c \leq \frac{\ell}{r_n} 
	\left( 
		e^{-n \eta_4(r_n/2)}
		+
		e^{-k_n r_n / \ell_{\text{SM}}}
	\right). 
$$

\noindent
Finally, $\sum_{n=1}^\infty P_{n}^c < \infty$ holds if
\begin{equation}
\label{eq:case1}
	\lim_{n \rightarrow \infty} \ln r_n + n \eta_4(r_n/2) = \infty,
\end{equation}
and
\begin{equation}
\label{eq:case2}
	\lim_{n \rightarrow \infty} \ln r_n + k_n r_n / \ell_{\text{SM}} = \infty,
\end{equation}
which can be easily satisfied using assumption \emph{A4} and by appropriately picking~$r_n$. 
By the Borel–Cantelli lemma~\cite{GS01}, $\sum_{n=1}^\infty P_{n}^c < \infty$ implies that
$$\lim_{n \rightarrow \infty} P_n^c = 0,$$
which, in turn,  implies that
$$\lim_{n \rightarrow \infty} P_n = 1,$$
which concludes the proof of Lemma~\ref{lem:pn}.
\end{proof}

\noindent
\textbf{Remark.} Assumption \emph{A4} ensures that there exists some function $\eta_4(\varepsilon) = \omega(\varepsilon^p)$   bounding from below the portion of $\Rpath$ that any ball lying on $\WellMan$ is mapped to.
The assumption that $\eta_4(\varepsilon) = \omega(\varepsilon^p)$
ensures that we can always choose $r_n$ that will satisfy that 
$\lim_{n \rightarrow \infty} r_n = 0$
\emph{and} Eq.~\ref{eq:case1}.
When $\eta_4(\varepsilon) = o(\varepsilon^p)$ this cannot be satisfied.

\subsection{Bounding the \frechet Distance and convergence to the optimal path}
Recall that $X_n = \{x_{n,1}, x_{n,2}, \ldots, x_{n,M_n} \}$ is a sequence of configurations such that $\forall i \ x_{n,i} \in B_{n,i}^\calM$.
Lemma~\ref{lem:pn} ensures that, asymptotically, our algorithm will sample such sequences as $n$ tends to $\infty$.
Furthermore, Lemma~\ref{lem:collision-free} ensures that the path $\xi_n$ connecting consecutive points in $X_n$ is collision free.

We are now ready to bound the \frechet distance between 
$\text{FK} ( \xi_n )$ and $\Rpath$ which will conclude the proof of Thm.~\ref{thm:ao}.
\begin{lem}
	$\lim_{n \rightarrow \infty} \frechetd(\text{FK}{(\xi_n)},\Rpath)  = 0$.
\end{lem}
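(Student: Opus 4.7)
The plan is to bound $\frechetd(\text{FK}(\xi_n), \Rpath)$ by exhibiting an explicit monotone pairing from the waypoints and subsamples of $\xi_n$ onto $\Rpath$ whose bottleneck cost vanishes uniformly as $n \to \infty$, using property P1 to convert C-space proximity into task-space proximity.

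First I would use the construction that each $x_{n,i}$ lies in $B_{n,i}^{\Man}$ centered at $q_{n,i} \in \xi$, so $\distC(x_{n,i}, q_{n,i}) \leq r_n$. Since $\xi$ was chosen so that $\text{FK}(\xi) = \Rpath$, each $\text{FK}(q_{n,i})$ is a point on $\Rpath$, and the sequence $\{\text{FK}(q_{n,i})\}_{i=1}^{M_n}$ is monotone along $\Rpath$. Property P1 then yields $\distT(\text{FK}(x_{n,i}), \text{FK}(q_{n,i})) \leq \eta_1(r_n)$, so the identity pairing $\text{FK}(x_{n,i}) \leftrightarrow \text{FK}(q_{n,i})$ is a valid monotone discrete Fréchet matching of cost at most $\eta_1(r_n)$ between $\{\text{FK}(x_{n,i})\}$ and the $M_n$ points $\{\text{FK}(q_{n,i})\} \subset \Rpath$.

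Next I would upgrade this bound to one against all of $\Rpath$ by including the straight-line C-space segments and the reference-path subsampling. Applying the triangle inequality twice, any subsample lying on the straight segment between $x_{n,i}$ and $x_{n,i+1}$ is within a constant multiple of $r_n$ from $q_{n,i}$ in $\Cspace$ (since $\ell_n = r_n$ and $\distC(x_{n,i}, q_{n,i}), \distC(q_{n,i+1}, x_{n,i+1}) \leq r_n$), so by P1 its task-space image lies within $\eta_1(c \cdot r_n)$ of $\text{FK}(q_{n,i}) \in \Rpath$ for some constant $c$. Coordinating the refinement of $G_{\bar{\xi}}$ so that the spacing between consecutive reference-path subsamples scales no slower than $r_n$, an interleaved monotone pairing between the $\text{FK}(\xi_n)$ subsamples and those of $\Rpath$ has bottleneck cost at most $\eta_1(c \cdot r_n)$ plus the task-space diameter of a single refined reference-path segment, both of which tend to zero.

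Combining these bounds with $\lim_{n \to \infty} r_n = 0$ and $\lim_{\varepsilon \to 0} \eta_1(\varepsilon) = 0$ (from P1) establishes the lemma, which together with Lemmas~\ref{lem:collision-free} and~\ref{lem:pn} completes the proof of Theorem~\ref{thm:ao}. I expect the most delicate step to be the monotonicity bookkeeping for the interleaved subsample pairing: the refinement of $G_{\bar{\xi}}$ must be coupled to $r_n$ finely enough that no subsample of $\text{FK}(\xi_n)$ is forced to pair with a point on $\Rpath$ lying outside the arc between $\text{FK}(q_{n,i})$ and $\text{FK}(q_{n,i+1})$, which essentially ties the densification-rate parameters of $L$ and $G_{\bar{\xi}}$ to the radius sequence chosen in Lemma~\ref{lem:pn}.
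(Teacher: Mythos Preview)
Your proposal is correct and rests on the same ingredients as the paper's proof: the ball construction giving $\distC(x_{n,i},q_{n,i})\leq r_n$, property~\emph{P1} to convert this into task-space proximity, and $r_n\to 0$. The paper's execution is more streamlined, however. Instead of building an explicit discrete pairing that includes edge subsamples and then worrying about coordinating the refinement rate of $G_{\bar{\xi}}$ with $r_n$, the paper invokes a direct segment-wise bound
\[
\frechetd(\overline{x,x'},\overline{y,y'})\leq\max\{\distT(x,y),\distT(x',y),\distT(x,y'),\distT(x',y')\}
\]
and decomposes $\frechetd(\text{FK}(\xi_n),\Rpath)=\frechetd(\text{FK}(\xi_n),\text{FK}(\xi))$ into a maximum over consecutive segment pairs $\bigl(\overline{\text{FK}(x_{n,i}),\text{FK}(x_{n,i+1})},\,\overline{\text{FK}(q_{n,i}),\text{FK}(q_{n,i+1})}\bigr)$. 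Since $\distC(x_{n,i},q_{n,i+1})\leq 2r_n$ by the triangle inequality and $\ell_n=r_n$, this yields $\frechetd(\text{FK}(\xi_n),\Rpath)\leq\eta_1(2r_n)$ immediately. The segment bound already absorbs all intermediate points on the straight-line edges, so the monotonicity bookkeeping and the coupling of densification rates that you flag as the delicate step simply never arise; what you gain from your route is a more explicit picture of the matching, at the cost of extra moving parts.
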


\begin{proof}
Given two points, $x,x'$, let $\overline{x,x'}$ denote the straight-line segment connecting two points.
Note that for every points $x,x', y,y'$ in task space, it holds that
\begin{align}
\label{eq:frechet-bound1}
\frechetd(\overline{x,x'}, \overline{y,y'})
	& \leq
	\max\{
		\distT(x,y),
		\distT(x',y), \nonumber  \\
	& ~~~~~~~~~~	    
		\distT(x,y'),
		\distT(x',y')
		\}.
\end{align}

\noindent
Furthermore, 
\begin{align}
\label{eq:frechet-bound2}
	\frechetd(\text{FK}{(\xi_n)},\Rpath) 
	& = 
	\frechetd(\text{FK}{(\xi_n)}, \text{FK}{(\xi)}) \nonumber \\
	& \leq 
	\max_i \{ \frechetd(
		\overline{\text{FK}(x_{n, i}), \text{FK}(x_{n, i+1})}, \nonumber \\
	& ~~~~~~~~~~~~~~~
		\overline{\text{FK}(q_{n, i}), \text{FK}(q_{n, i1})}
			)\}. 
\end{align}
We have that 
(i)~$\forall i \distC(x_{n, i}, q_{n,i}) \leq r_n$
and using the triangle inequality and that $\ell_n = r_n$ that
(ii)~$\distC(x_{n, i}, q_{n,i+1}) \leq 2r_n$.
Plugging these into Eq.~\ref{eq:frechet-bound2} together with Eq.~\ref{eq:frechet-bound1} and using property \emph{P1},

\begin{equation}
\frechetd(\text{FK}{(\xi_n)},\Rpath)  \leq \eta_1(2r_n).
\end{equation}

\noindent
Finally, by the fact that 
$\lim_{n \rightarrow \infty} r_n = 0$ 
and by Property~\emph{P1} , we have that
$$\lim_{n \rightarrow 0} \eta_1(2r_n) = 0.$$
which will conclude the proof.
\end{proof}

\section{Supplemental Results}
\label{apndx:results}

\begin{figure*}[!t]
  \centering
  \minipage{0.32\textwidth}
  \includegraphics[width=\textwidth]{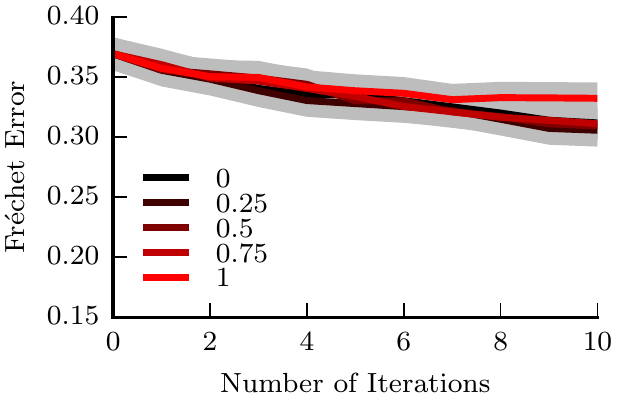}
  \endminipage 
  \minipage{0.32\textwidth}
  \includegraphics[width=\textwidth]{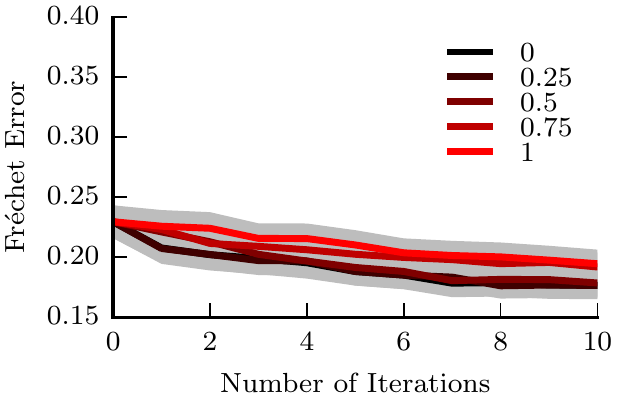}
  \endminipage
  \minipage{0.32\textwidth}
  \includegraphics[width=\textwidth]{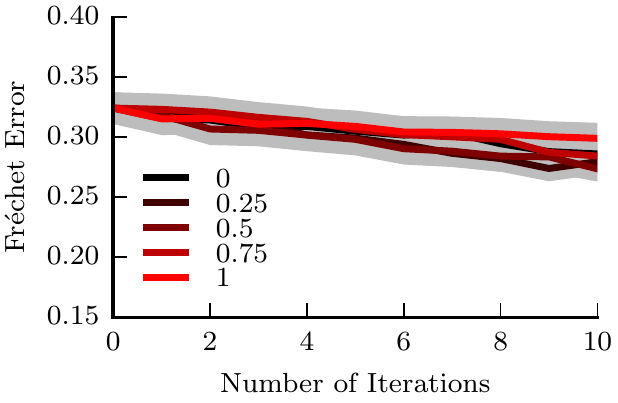}
   \endminipage
\caption{Across multiple paths and parameter settings, we compare the effect of various $p$-values for our hybrid densification strategy.}\label{fig:all_hybrid}\vspace{2mm}
\end{figure*} 

\begin{figure*}[!t]
  \centering
  \minipage{0.32\textwidth}
  \includegraphics[width=\textwidth]{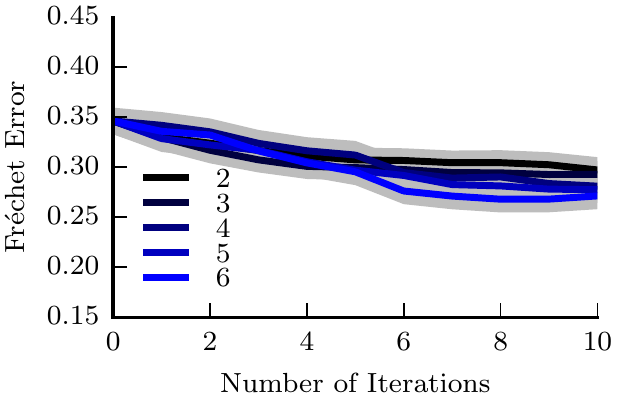}
  \endminipage 
  \minipage{0.32\textwidth}
  \includegraphics[width=\textwidth]{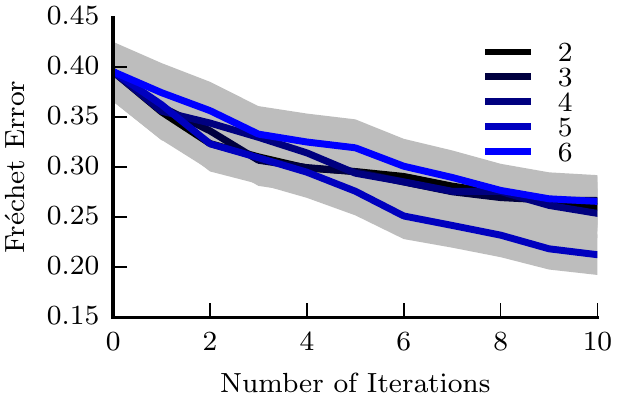}
  \endminipage
  \minipage{0.32\textwidth}
  \includegraphics[width=\textwidth]{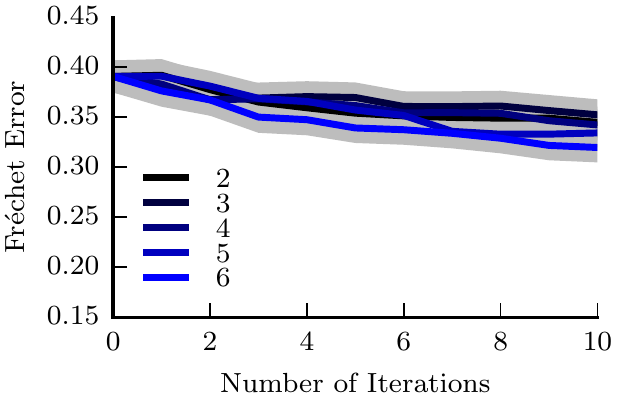}
  \endminipage
\caption{Across multiple paths and parameter settings, we compare the effect of various $m$-values for our local-then-global densification strategy.}\label{fig:all_segmented}\vspace{2mm}
\end{figure*} 

We present a more complete review of our experimental results that show comparisons across initial parameters and reference paths. 
Reviewing the experimental setup described in \sref{sec:experiment_densification}, we use the bimanual manipulator HERB to generate 100 instances of layered graphs for a given reference path $\bar{\xi}$, all with the same initial number of waypoints, IK solutions per waypoint, and level of subsampling resolution.  
For each problem we randomly place rectangular boxes in the vicinity of the robot. 
We then conduct many iterations of densification.
We first compare the parameters of each densification strategy before comparing the two strategies against each other. 
We then provide further comparisons of our algorithm with several state-of-the-art planners. 

\subsection{Densification Strategies}
Our two strategies, hybrid and local-then-global, each have one parameter.
For the hybrid strategy we compare $p$-values in the set $\{0, 0.25, 0.5, 0.75, 1\}$ across several paths and parameter settings in \figref{fig:all_hybrid}. 
In general, lower $p$-values (biasing local updates), produce paths with a shorter \frechet distance at each iteration. 
For the local-then-global strategy (referred to as, L-t-G) we compare $m$-values in the set $\{2, 3, 4, 5, 6\}$ across several paths and parameter settings in \figref{fig:all_segmented}. 
In general, mid-range $m$-values produce the best-quality results.

We select $p=0.25$ for the hybrid strategy and $m=5$ for the local-then-global strategy and compare their performance in \figref{fig:all_combo} and \figref{fig:all_methods}. The two methods achieve similar performance, with local-then-global edging out the hybrid strategy.

\begin{figure}[!t]
  \centering
  \minipage{0.45\columnwidth}
  \includegraphics[width=\textwidth]{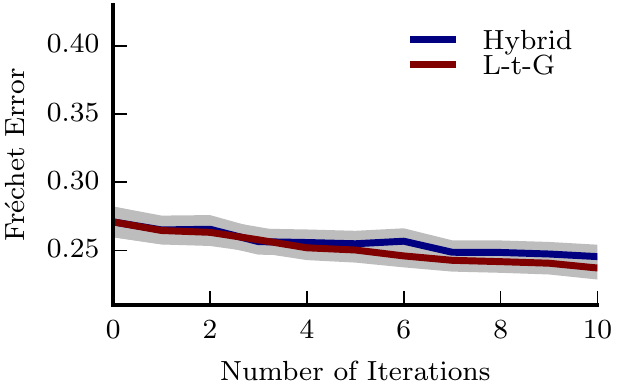}
  \endminipage 
  \minipage{0.45\columnwidth}
  \includegraphics[width=\textwidth]{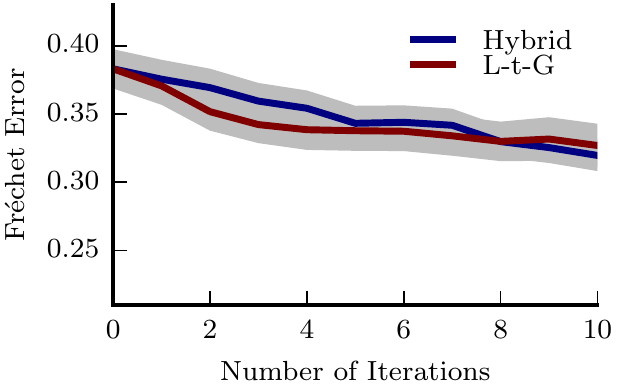}
  \endminipage
\caption{Evaluation of our two densification strategies, hybrid and local-then-global (L-t-G) across different paths and parameter settings.}\label{fig:all_combo}\vspace{2mm}
\end{figure} 

\begin{figure}[!t]
  \centering
  \minipage{0.45\columnwidth}
  \includegraphics[width=\textwidth]{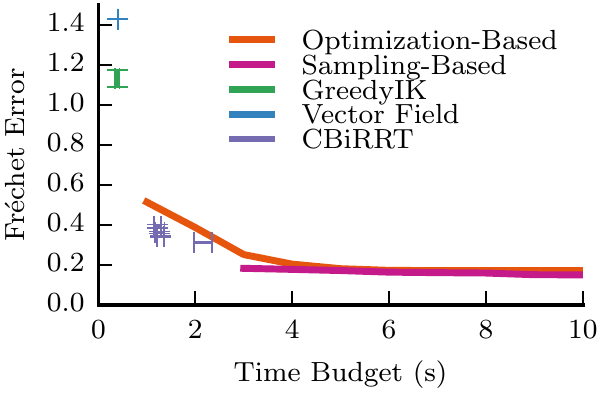}
  \endminipage 
  \minipage{0.45\columnwidth}
  \includegraphics[width=\textwidth]{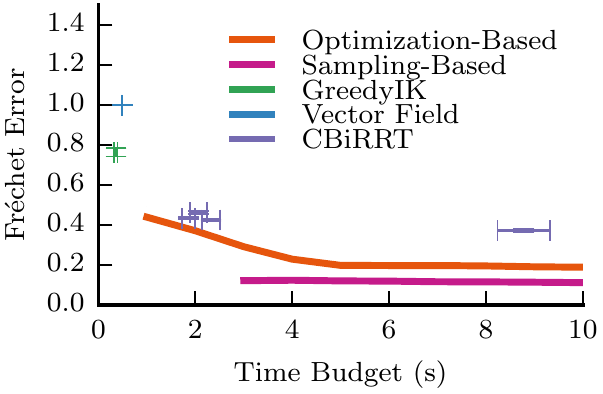}
  \endminipage
\caption{Empirical evaluation of our algorithm compared state-of-the-art planners on two additional random paths.}\label{fig:all_algo}\vspace{2mm}
\end{figure} 

\subsection{Cross-Algorithm Comparison}
In \sref{sec:results}, we compared our sampling-based algorithm with four other planners: an optimization-based approach~\cite{holladay2016distance}, a vector-field planner~\cite{srinivasa2016system}, a greedy inverse kinematic planner~\cite{srinivasa2016system} and CBiRRT (Constrained Bidirectional RRT)~\cite{berenson2009manipulation}. Further descriptions of each algorithm can be found in \sref{sec:results}. 
\figref{fig:all_algo} shows the performance of each planner on different paths, echoing the trends seen in \figref{fig:algo_compare}.

}

\end{document}